\documentclass{article}

\usepackage[T1]{fontenc}

\usepackage{microtype}
\usepackage{graphicx}
\usepackage{subcaption}
\usepackage{booktabs} 
\usepackage{tabularx}
\usepackage{multirow}
\usepackage{array}

\usepackage{hyperref}

\usepackage[preprint]{icml2026}

\usepackage{amsmath}
\usepackage{amssymb}
\usepackage{mathtools}
\usepackage{amsthm}

\usepackage[capitalize,noabbrev]{cleveref}

\usepackage{tcolorbox}

\theoremstyle{plain}
\newtheorem{theorem}{Theorem}[section]

\theoremstyle{definition}
\newtheorem{definition}[theorem]{Definition}

\theoremstyle{remark}

\usepackage[textsize=tiny]{todonotes}

\usepackage{algorithm}
\usepackage{algorithmic}
\usepackage{xcolor}
\usepackage{float} 
\usepackage{tikz}
\usetikzlibrary{positioning, calc, arrows.meta, shapes, decorations.pathreplacing}

\icmltitlerunning{Expert Threshold Routing}

\begin{document}

\twocolumn[
\icmltitle{Expert Threshold Routing for Autoregressive Language Modeling with Dynamic Computation Allocation and Load Balancing}




\begin{icmlauthorlist}
\icmlauthor{Ryan Sun}{lehigh}
\icmlauthor{Yixin Liu}{lehigh}
\icmlauthor{Yonghui Wu}{UF}
\icmlauthor{Lichao Sun}{lehigh}
\end{icmlauthorlist}

\icmlaffiliation{lehigh}{Computer Science and Engineering, Lehigh University, Bethlehem, PA, USA}
\icmlaffiliation{UF}{MD-HOBI-BIOMED INFORMATICS, University of Florida, Gainesville, FL, USA}
\icmlcorrespondingauthor{Lichao Sun}{lis221@lehigh.edu}

\icmlkeywords{Large Language Models, Mixture of Experts, Sparse Architectures, Expert Choice}

\vskip 0.3in
]



\printAffiliationsAndNotice{Code available at \href{https://github.com/MasterGodzilla/Expert-Threshold-Routing}{GitHub repository}.}  

\begin{abstract}

Token-choice Mixture-of-Experts (TC-MoE) routes each token to a fixed number of experts, limiting dynamic computation allocation and requiring auxiliary losses to maintain load balance. We propose Expert Threshold (ET) routing, where each expert maintains an exponential moving average (EMA) threshold estimated from the global token distribution. At both training and inference, each token is independently routed to an expert if its score exceeds the expert's threshold, enabling dynamic computation allocation while achieving load balance without auxiliary losses. This fully causal mechanism eliminates dependence on other tokens in the batch, making it well-suited for autoregressive language modeling. In pretraining experiments scaling to 2.4B parameters on FineWeb-Edu, ET achieves 0.067 lower cross-entropy loss than TC-MoE, equivalent to reaching the same performance with 1.6× fewer tokens.

\end{abstract}

\section{Introduction}

\begin{figure}[!t]
\centering
\includegraphics[width=\linewidth]{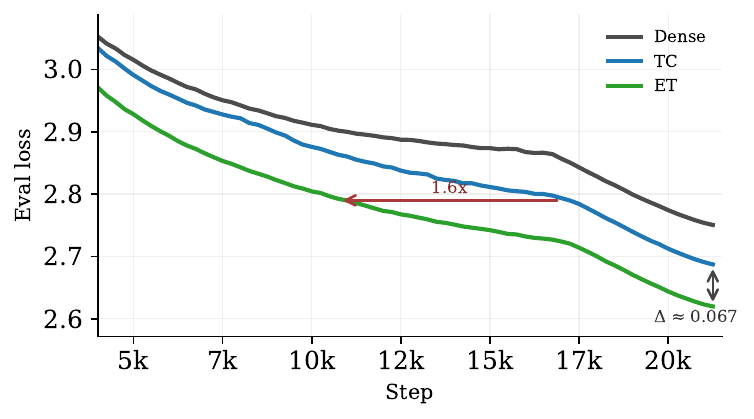}
\caption{evaluation loss for Dense, TC, and ET. Compared to TC, ET achieves a 0.067 final loss gap (TC vs ET), or equivalently reaching same performance level with 1.6x few tokens.}
\label{fig:d20_eval_loss_dense_tc_et}
\end{figure}


\begin{figure*}[t]
\centering
\resizebox{\textwidth}{!}{%
\begin{tikzpicture}[
    font=\sffamily,
    >={Stealth[length=4pt]},
]

\colorlet{tcblue}{blue!70!black}
\colorlet{tcfill}{blue!15}
\colorlet{ecorange}{orange!80!black}
\colorlet{ecfill}{orange!45}
\colorlet{gecteal}{teal!80!black}
\colorlet{gecfill}{teal!50}

\def\tcX{0}
\def\ecX{6.9}     
\def\seqX{4.7}
\def\batX{9.2}
\def\gecX{14.3}

\def\titleY{6.2}
\def\tagY{5.52}
\def\formulaY{5.0}

\node[font=\large\bfseries, tcblue] at (\tcX, \titleY) {Token Choice (TC)};
\node[font=\small\bfseries, red!70!black, anchor=base] at (\tcX, \tagY) {Load Imbalance};

\def\tokw{0.36}
\foreach \j [count=\idx from 0] in {1,...,7} {
    \pgfmathsetmacro{\tx}{\tcX - 3*\tokw + \idx*\tokw}
    \pgfmathtruncatemacro{\issel}{(\idx==3)}
    \ifnum\issel>0
        \node[draw, minimum width=\tokw cm - 0.04cm, minimum height=\tokw cm - 0.04cm,
              inner sep=0pt, fill=tcfill] (tok\j) at (\tx, 1.90) {};
    \else
        \node[draw=gray!50, minimum width=\tokw cm - 0.04cm, minimum height=\tokw cm - 0.04cm,
              inner sep=0pt, fill=gray!12, fill opacity=0.55, draw opacity=0.65] (tok\j) at (\tx, 1.90) {};
    \fi
}
\node[font=\scriptsize, tcblue] at (\tcX, 1.55) {token input};

\node[font=\small, tcblue] at (\tcX, 0.96) {$\mathrm{Top}_G\!\bigl(\{r_{t,i}\}_{\forall i}\bigr)$};

\foreach \i/\xoff in {1/-1.6, 2/-0.55, 3/0.55, 4/1.6} {
    \node[draw, rounded corners=4pt, minimum width=0.95cm, minimum height=1.0cm,
          fill=tcfill, align=center, font=\small]
          (exp\i) at (\tcX+\xoff, 3.65) {$\mathcal{M}_{\i}$};
}

\draw[->, tcblue!70, semithick] (tok4.north) -- (exp1.south);

\draw[->, tcblue!35, semithick, opacity=0.55] (tok2.north) -- (exp1.south);
\draw[->, tcblue!35, semithick, opacity=0.55] (tok6.north) -- (exp3.south);
\draw[->, tcblue!35, semithick, opacity=0.55] (tok1.north) -- (exp4.south);
\draw[->, tcblue!35, semithick, opacity=0.55] (tok3.north) -- (exp4.south);
\draw[->, tcblue!35, semithick, opacity=0.55] (tok5.north) -- (exp4.south);
\draw[->, tcblue!35, semithick, opacity=0.55] (tok7.north) -- (exp4.south);

\foreach \xoff/\n in {-1.6/2, -0.55/0, 0.55/1, 1.6/4} {
    \foreach \k in {1,...,4} {
        \ifnum\k>\n\relax
        \else
            \fill[tcblue!80] (\tcX+\xoff, 4.08 + 0.20*\k) circle (1.8pt);
        \fi
    }
}

\node[font=\large\bfseries, ecorange] at (\ecX, \titleY) {Expert Choice (EC)};
\node[font=\small\bfseries, red!70!black, anchor=base] at (\ecX, \tagY) {Non Causal};

\node[font=\small\bfseries, ecorange] at (\seqX, \formulaY) {Seq top-$k$};

\node[draw, rounded corners=4pt, minimum width=1.0cm, minimum height=0.8cm,
      fill=ecfill!50, align=center, font=\small]
      (seqexp) at (\seqX, 4.2) {$\mathcal{M}_i$};

\def\sw{0.42}
\foreach \j [count=\idx from 0] in {1,...,8} {
    \pgfmathsetmacro{\bx}{\seqX - 3.5*\sw + \idx*\sw}
    \pgfmathtruncatemacro{\sel}{(\idx==1) + (\idx==4) + (\idx==6)}
    \ifnum\sel>0
        \node[draw, minimum width=\sw cm - 0.04cm, minimum height=\sw cm - 0.04cm,
              inner sep=0pt, fill=ecfill] (st\j) at (\bx, 2.8) {};
    \else
        \node[draw, minimum width=\sw cm - 0.04cm, minimum height=\sw cm - 0.04cm,
              inner sep=0pt, fill=white] (st\j) at (\bx, 2.8) {};
    \fi
}

\draw[->, ecorange, semithick] (seqexp.south) -- (st2.north);
\draw[->, ecorange, semithick] (seqexp.south) -- (st5.north);
\draw[->, ecorange, semithick] (seqexp.south) -- (st7.north);

\draw[decorate, decoration={brace, amplitude=5pt, mirror}, thick, ecorange]
    (st1.south west) ++(0,-0.1) coordinate (seqBraceL)
    -- (st8.south east |- seqBraceL) ++(0,-0.1) coordinate (seqBraceR)
    node[midway, below=7pt, font=\scriptsize, ecorange] {1 sequence};

\node[font=\small, ecorange] at (\seqX, 0.96) {$\mathrm{Top}_k\!\bigl(\{r_{t,i}\}_{\forall t\in\mathrm{seq}}\bigr)$};

\node[font=\small\bfseries, ecorange] at (\batX, \formulaY) {Batch top-$k$};

\node[draw, rounded corners=4pt, minimum width=1.0cm, minimum height=0.8cm,
      fill=ecfill!50, align=center, font=\small]
      (batexp) at (\batX, 4.2) {$\mathcal{M}_i$};

\def\gw{0.42}
\def\gh{0.42}
\pgfmathsetmacro{\gridLeft}{\batX - 2.5*\gw}

\foreach \r in {0,1,2} {
    \foreach \c in {0,...,5} {
        \pgfmathsetmacro{\gx}{\gridLeft + \c*\gw}
        \pgfmathsetmacro{\gy}{3.2 - \r*\gh}
        \pgfmathtruncatemacro{\issel}{%
            (\r==0 && \c==1) + (\r==0 && \c==4) +
            (\r==1 && \c==3) +
            (\r==2 && \c==0) + (\r==2 && \c==5)}
        \ifnum\issel>0
            \node[draw, minimum width=\gw cm - 0.04cm, minimum height=\gh cm - 0.04cm,
                  inner sep=0pt, fill=ecfill] (g\r\c) at (\gx, \gy) {};
        \else
            \node[draw, minimum width=\gw cm - 0.04cm, minimum height=\gh cm - 0.04cm,
                  inner sep=0pt, fill=white] (g\r\c) at (\gx, \gy) {};
        \fi
    }
}

\node[font=\scriptsize, anchor=east] at (\gridLeft - 0.3, 3.2) {$s_1$};
\node[font=\scriptsize, anchor=east] at (\gridLeft - 0.3, 3.2-\gh) {$s_2$};
\node[font=\scriptsize, anchor=east] at (\gridLeft - 0.3, 3.2-2*\gh) {$s_3$};

\draw[->, ecorange, semithick] (batexp.south) -- (g01.north);
\draw[->, ecorange, semithick] (batexp.south) -- (g04.north);
\draw[->, ecorange, semithick] (batexp.south) -- (g13.north);
\draw[->, ecorange, semithick] (batexp.south) -- (g20.north);
\draw[->, ecorange, semithick] (batexp.south) -- (g25.north);

\draw[decorate, decoration={brace, amplitude=5pt, mirror}, thick, ecorange]
    (g20.south west) ++(0,-0.1) coordinate (batBraceL)
    -- (g25.south east |- batBraceL) ++(0,-0.1) coordinate (batBraceR)
    node[midway, below=7pt, font=\scriptsize, ecorange] {1 batch};

\node[font=\small, ecorange] at (\batX, 0.96) {$\mathrm{Top}_k\!\bigl(\{r_{t,i}\}_{\forall t\in\mathrm{batch}}\bigr)$};

\node[font=\large\bfseries, gecteal] at (\gecX, \titleY) {Expert Threshold (ET)};
\node[font=\small\bfseries, green!50!black, anchor=base] at (\gecX, \tagY) {Fully Causal};

\def\bw{0.44}
\def\bgap{0.12}
\def\bbase{2.0}
\def\bstart{12.3}
\pgfmathsetmacro{\bstep}{\bw+\bgap}

\def\bscale{0.56}
\def\thresh{2.60}

\foreach \b/\h/\above in {0/1.0/0, 1/3.6/1, 2/2.4/0, 3/0.7/0, 4/4.2/1, 5/1.8/0, 6/2.9/1, 7/0.5/0} {
    \pgfmathsetmacro{\bx}{\bstart + \b*\bstep}
    \pgfmathsetmacro{\btop}{\bbase + \h*\bscale}
    \ifnum\above=1
        \fill[gecfill] (\bx-\bw/2, \bbase) rectangle (\bx+\bw/2, \btop);
        \draw[gecteal] (\bx-\bw/2, \bbase) rectangle (\bx+\bw/2, \btop);
    \else
        \fill[white] (\bx-\bw/2, \bbase) rectangle (\bx+\bw/2, \btop);
        \draw[gray!55] (\bx-\bw/2, \bbase) rectangle (\bx+\bw/2, \btop);
    \fi
}

\pgfmathsetmacro{\thY}{\bbase + \thresh*\bscale}
\draw[dashed, gecteal, line width=1.2pt]
    (\bstart - 0.6, \thY) -- (\bstart + 7*\bstep + 0.6, \thY);
\node[font=\scriptsize, gecteal, anchor=west] at (\bstart + 7*\bstep - 0.05, \thY + 0.16)
    {EMA Threshold $c_i$};

\draw[->, thick] (\bstart - 0.7, \bbase - 0.15) -- (\bstart - 0.7, \bbase + 4.5*\bscale + 0.3);
\node[font=\scriptsize, rotate=90, anchor=south] at (\bstart - 0.88, \bbase + 1.6) {Score $r_{t,i}$};

\draw[->, thick] (\bstart - 0.6, \bbase - 0.15) -- (\bstart + 7*\bstep + 0.7, \bbase - 0.15);
\node[font=\footnotesize] at (\bstart + 3.5*\bstep, \bbase - 0.45) {Token index $t$};

\node[font=\small, gecteal, align=center] at (\gecX, 0.96)
    {$z_{t,i} = \mathbf{1}\{r_{t,i} > c_i\}$};

\draw[gray!40, line width=1pt] (2.7, 0.6) -- (2.7, \titleY+0.4);
\draw[gray!40, line width=1pt] (10.8, 0.6) -- (10.8, \titleY+0.4);
\draw[gray!20, dashed] (\ecX+0.1, 0.6) -- (\ecX+0.1, \formulaY-0.2);

\def\axY{0.2}
\draw[-{Stealth[length=6pt]}, thick, gray!75] (-2.9, \axY) -- (16.8, \axY);
\node[font=\normalsize\itshape, gray!80] at (7.2, \axY - 0.58) {routing pool size};

\foreach \lab/\xp in {token/\tcX, sequence/\seqX, batch/\batX, population/\gecX} {
    \draw[thick, gray!80] (\xp, \axY-0.1) -- (\xp, \axY+0.1);
    \node[font=\footnotesize, gray!80, anchor=north] at (\xp, \axY-0.15) {\lab};
}

\end{tikzpicture}%
}
\caption{Illustration of TC, EC, and ET routing mechanisms and their routing pools. \textbf{Left:} TC routes each token independently to its top-$G$ experts, causing load imbalance. \textbf{Middle:} EC has each expert select its top-$k$ tokens from the batch, requiring access to all tokens including future ones (non-causal). \textbf{Right:} ET routes each token independently by comparing its score against the population's top-$(1/E)$ quantile estimated by an EMA-tracked threshold $c_i$, enabling fully causal routing over the population.}
\label{fig:illustration_v2}
\end{figure*}
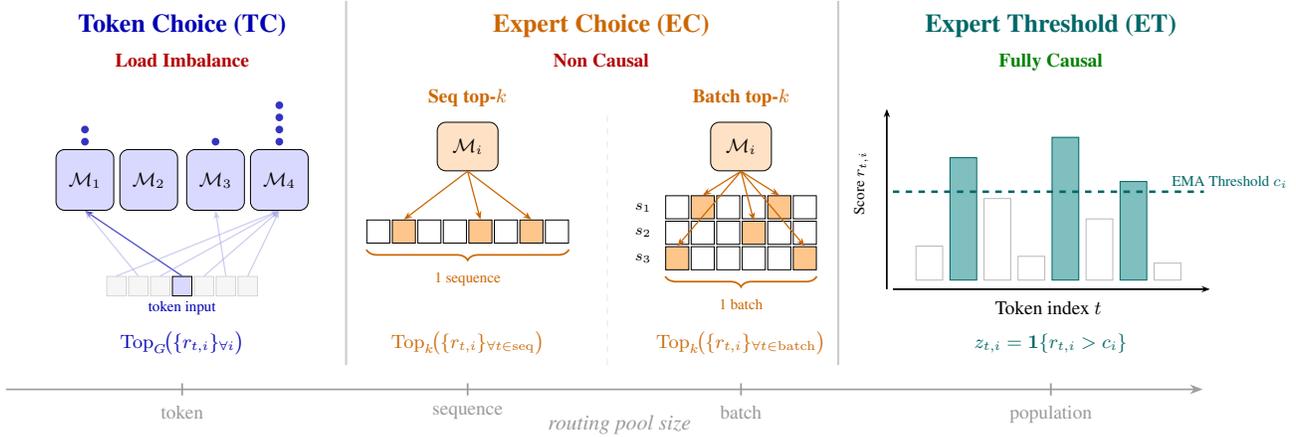

Mixture of Experts (MoE) architectures~\cite{shazeer2017outrageously, lepikhin2021gshard, fedus2022switch} have emerged as a leading approach to scale language models efficiently, powering frontier models like DeepSeek-V3~\cite{deepseekv3}. By sparsely activating only a subset of expert networks per token, MoE decouples model capacity from computational cost, enabling massive parameter counts with tractable FLOPs.
However, sparse routing introduces a fundamental tension: without intervention, routers tend to collapse onto a small subset of experts~\cite{shazeer2017outrageously}. This harms model quality, as underutilized experts become redundant parameters that waste capacity.
It also creates hardware bottlenecks under Expert Parallelism~\cite{lepikhin2021gshard}, where skewed loads leave some devices idle and others overloaded.
Thus, we need a routing mechanism that roughly maintains load balancing. 

Prior work falls into two categories.
The prevalent token choice (TC) routing~\cite{fedus2022switch} fixes the number of experts each token selects. This sparsity constraint not only fails to address load imbalance, but further complicates the routing as it conflicts with load balancing, turning the routing into a combinatorial optimization problem. People resort to heuristics to approximate load balancing, such as auxiliary losses~\cite{lepikhin2021gshard, fedus2022switch} or PID controllers~\cite{longcat2025, wang2024auxiliarylossfreeloadbalancingstrategy}.
In contrast, expert choice (EC) routing~\cite{zhou2022mixture} relaxes the fixed computation budget per token and only enforces load balancing within a batch by selecting the top-$k$ tokens for each expert, achieving perfect load balancing by construction while enabling dynamic computation allocation. However, EC routing fundamentally violates causality, making it unsuitable for autoregressive language models. Selecting top-$k$ requires comparing against the entire batch that includes future positions. At training time this mechanism leaks information~\cite{wang2024auxiliarylossfreeloadbalancingstrategy}; at inference time future tokens simply do not exist. 

In this paper, we relax both per-token sparsity and per-batch load balancing, requiring only that load reaches a targeted activation rate in expectation. The resulting mechanism, Expert Threshold (ET) routing, routes each token by comparing its score to a quantile threshold tracked from each expert's global score distribution. Because the same threshold is used at training and inference, ET routing is fully causal with no train-inference mismatch.

Pretraining a 2.4B (0.56B active) language model on FineWeb-Edu, ET outperforms TC by 0.067 in cross-entropy loss while achieving near-perfect load balancing. We further show that EC's performance improves with batch size, and that models trained with large-batch EC can perform causal inference using our threshold-based routing without retraining.

\section{Preliminaries: Routing as Constrained Optimization}
\label{sec:preliminaries}

An MoE layer replaces a dense feed-forward block with a router and $GE$ experts. Consider a batch of $N$ tokens with representations $x_t \in \mathbb{R}^d$. The router computes scores
\begin{equation}
r_{t,i} = (W_r x_t)_i,
\end{equation}
collected into a matrix $r \in \mathbb{R}^{N \times GE}$. Based on $r$, a routing rule produces a binary assignment $z \in \{0,1\}^{N \times GE}$ where $z_{t,i} = 1$ indicates expert $i$ is activated for token $t$ and $0$ otherwise. Each selected expert $i$ computes an output $y_{i,t} \in \mathbb{R}^d$, weighted by a gate value $p_{t,i} = \sigma(r_{t,i})$. The MoE output for token $t$ is
\begin{equation}
y_t = \sum_{i=1}^{GE} z_{t,i}\, p_{t,i}\, y_{i,t}.
\end{equation}
The routing rule that determines $z$ therefore controls both compute allocation and expert load balance. We formalize MoE routing as finding $z$ that maximizes the total routing score subject to computational constraints, since higher scores indicate stronger token-expert affinity and, through the gate $p_{t,i}$, larger expert contributions to the output.

\paragraph{Token Choice Routing}
The standard Token Choice routing goal is:
\begin{equation}
\label{eq:primal}
\begin{aligned}
\max_{z} \quad & \sum_{t=1}^N \sum_{i=1}^{GE} z_{t,i} r_{t,i} \\
\text{s.t.} \quad & \sum_{i=1}^{GE} z_{t,i} = G, \quad \forall t \quad \text{(Sparsity)} \\
& \sum_{t=1}^N z_{t,i} = k, \quad \forall i \quad \text{(Load Balancing)} \\
& z_{t,i} \in \{0, 1\}
\end{aligned}
\end{equation}
Here the sparsity constraint ensures each token selects exactly $G$ experts, and the Load Balancing constraint ensures each expert processes exactly $k = N/E$ tokens.
Solving~\eqref{eq:primal} exactly requires combinatorial algorithms such as the $O(N^3)$ Hungarian Matching algorithm. Most Token Choice (TC) methods therefore strictly enforce the sparsity constraint by setting $z_{t,i} = 1 \iff i \in \text{Top}_G(r_{t, \cdot})$, while relying on auxiliary losses~\cite{lepikhin2021gshard, fedus2022switch} or loss-free load balancing strategies~\cite{wang2024auxiliarylossfreeloadbalancingstrategy} to approximate the load balancing constraint.

\paragraph{Expert Choice Routing}
While the load balancing constraint is essential to avoid routing collapse, the sparsity constraint has no practical benefit. Thus, Expert Choice (EC)~\cite{zhou2022mixture} removes the sparsity constraint entirely and enforces only load balancing within batches. The primal problem becomes:
\begin{equation}
\label{eq:ec_relax}
\begin{aligned}
\max_{z} \quad & \sum_{t=1}^N \sum_{i=1}^{GE} z_{t,i} r_{t,i} \\
\text{s.t.} \quad & \sum_{t=1}^N z_{t,i} = k, \quad \forall i \\
& z_{t,i} \in \{0, 1\}
\end{aligned}
\end{equation}
with trivial closed-form solution $z_{t,i} = \mathbf{1}\{t \in \text{Top}_k(r_{\cdot, i})\}$, i.e. picking the top-$k$ tokens in each batch. This design has two key benefits: (1) Perfect load balancing: each expert processes exactly $k = N/E$ tokens by construction, eliminating the need for auxiliary losses or capacity clipping; (2) Dynamic computation: a token may be selected by zero, one, or multiple experts, enabling adaptive compute allocation based on token importance.

However, the per sequence load balancing constraint in EC introduces a causality problem for autoregressive generation. The selection indicator $z_{t,i}$ depends on all tokens' scores $\{r_{1,i}, \ldots, r_{N,i}\}$---including future tokens unavailable during inference. Extending EC to batch-level top-$k$~\cite{ludziejewski2024scaling} partially alleviates this but does not fully restore causality, as routing still depends on batch composition.

\section{Expert Threshold}
In the preliminaries, we identified the constraints that token choice and expert choice routing impose, yet we question their necessity. To avoid routing collapse, asymptotic load balancing suffices. ET further relaxes the per-sequence or per-batch Load Balancing constraint to a stochastic expectation:
\begin{equation}
\label{eq:gec_relax}
\begin{aligned}
\max_{z} \quad & \mathbb{E}_{\text{data}} \!\left[ \sum_{i=1}^{GE} z_{t,i} r_{t,i} \right] \\
\text{s.t.} \quad & \mathbb{E}_{\text{data}} [ z_{t,i} ] = \frac{1}{E}, \quad \forall i \\
& z_{t,i} \in \{0, 1\}
\end{aligned}
\end{equation}
Essentially, solving this primal problem is equivalent to picking the top $1/E$ fraction of tokens from the full router logit distribution, rather than from a single batch. We may obtain a $(1 - 1/E)$-quantile estimate $c_i$ via exponential moving average (EMA) of the $k$-th largest router logit of each batch. Then, for both training and inference, we route tokens via binary thresholding, setting 
\begin{equation}z_{t,i}=\mathbf{1}\{r_{t,i}>c_i\}\end{equation}
where $z_{t,i}\in \{0,1\}$ is the binary indicator of whether token $t$ is routed to expert $i$. Since $z_{t,i}$ depends only on $r_{t,i}$ and the global threshold $c_i$, routing is fully causal while satisfying load balancing in expectation.

\paragraph{Connection to EC.} 
Conceptually, ET can be viewed as expert choice routing over an infinitely large batch. In standard EC, each expert selects its top-$k$ tokens within the batch, so the selection threshold depends on all tokens present. As the batch size grows, however, each individual token's influence on this threshold vanishes, and the routing decision for any token becomes independent of others. ET approximates this limit by maintaining a fixed threshold estimated from the global token distribution.

ET and EC handle batch-wise variance differently. EC enforces perfect load balance per batch by letting the threshold vary, which means routing decisions fluctuate with batch composition. ET instead fixes the threshold for stable routing decisions, accepting small variance in per-batch expert utilization. Despite this difference in training, we show that ET routing can serve as causal inference for EC-trained models without retraining, provided the batch size is sufficiently large.

\paragraph{Warmup.} At the beginning of training, the router logits' distribution is not stable yet. The cutoff-EMA requires several thousand steps to converge to a meaningful estimate of the population quantile. During this period, incorrect thresholds cause severe expert starvation—most tokens fail to exceed the threshold, leaving experts underutilized. To address this cold-start problem, we use standard EC routing for the first 4k steps before switching to ET. This allows the cutoff-EMA to accumulate stable statistics under controlled load balance.


\begin{algorithm}[t]
    \caption{Expert Threshold Routing}
    \label{alg:gec}
    \begin{algorithmic}[1]
    \STATE \textbf{Input:} router logits $r \in \mathbb{R}^{N \times GE}$, cutoff-EMA $\{c_i\}$, decay rate $\beta$, target selection size $k= N/E$
    \FOR{expert $i = 1, \ldots, GE$}
        \STATE $z_{t,i} \gets \mathbf{1}\{r_{t,i} > c_i\} \quad \forall t$ 
        \IF{\textsc{Training}}
            \STATE $c_i \gets \beta c_i + (1-\beta)\cdot \text{kth-largest}(\{r_{t,i}\}_{t=1}^{N}, k)$
        \ENDIF
    \ENDFOR
    \STATE \textbf{Return} $z$, $\{c_i\}$
    \end{algorithmic}
    \end{algorithm}



\section{Experiments}
\label{sec:experiments}

\subsection{Experiment Setup}
We evaluate our methods on Nanochat~\citep{karpathy2025nanochat}, an open-source codebase for training GPT-2-like models. 
We conduct experiments at two scales: a d12 model (575M parameters, 195M active) with 12 transformer layers, and a d20 model (2.4B parameters, 561M active) with 20 transformer layers.
For MoE layers, we use 16 routed experts with granularity $G{=}1$ and expansion $E{=}16$, plus 1 shared expert. Each token activates the shared expert and on average 1 routed expert. We use sigmoid gates ($p_{t,i} = \sigma(r_{t,i})$) instead of softmax gates following LossFree~\cite{wang2024auxiliarylossfreeloadbalancingstrategy} and Mixture-of-Depths~\cite{raposo2024mixture}.
We add expert capacity factor of $C = 0.5$ to avoid GPU out-of-memory.
The first layer is kept dense following common practice~\citep{deepseekv3, wang2024auxiliarylossfreeloadbalancingstrategy} to allow meaningful routing. We train on 10B and 11.2B tokens for d12 and d20, respectively, from the FineWeb-Edu 100B dataset~\citep{penedo2024the} with a batch size of 0.5M tokens (for d20, we halve the minibatch size and use 2-step gradient accumulation). We report CE loss and CORE benchmark results~\citep{li2024dclm}. Architecture, training, and evaluation details are in Appendices~\ref{app:arch_diff}, \ref{app:training_setup}, and~\ref{app:coreeval}.

\subsection{Main Results}
We compare Expert Threshold (ET) routing against Expert Choice (EC) and Token Choice (TC) routing. All variants share the same architecture and parameter count. For ET, we use EMA decay $\beta = 0.999$ and EC warmup for the first 4k steps.
For EC, we sweep the global selection batch size from 2k to 512k tokens during training and use ET's cutoff EMA during inference which makes it fully causal. Unless stated, reported CORE/CE use the causal protocol.
For TC, we report variants with no load balancing, auxiliary loss ($\alpha{=}0.001$), and loss-free load balancing ($u{=}0.005$). Tables~\ref{tab:main_results} and~\ref{tab:d20_results} summarize results.
ET consistently outperforms TC in both CE loss (by 0.05 on d12 and 0.067 on d20) and CORE (by 1.89 on d12 and 2.83 on d20). 
EC with large batch sizes achieves comparable CE loss to ET, confirming that explicit large-batch selection and EMA-based thresholding reach similar training loss. EC 512k slightly edges out ET on CORE (19.94 vs.\ 19.88) in d12, though both substantially outperform TC.

\begin{table}[!t]
\centering
\caption{Main results comparing Expert Choice (EC), Token Choice (TC), and Expert Threshold (ET) routing. \textbf{Batch}: token routing pool size. EC uses global selection batch, TC uses per-step batch, ET reports effective EMA pool size $N/(1-\beta)$. TC variants: no load-balancing, auxiliary loss ($\alpha{=}0.001$), or loss-free ($u{=}0.005$). We report validation cross-entropy (CE) loss ($\downarrow$) and CORE Eval score ($\uparrow$).}
\label{tab:main_results}
\small
\setlength{\tabcolsep}{4pt}
\begin{tabularx}{\columnwidth}{@{}>{\raggedright\arraybackslash}X c r r@{}}
\toprule
\textbf{Method} & \textbf{Batch} & \textbf{CE loss ($\downarrow$)} & \textbf{CORE ($\uparrow$)} \\
\midrule
dense & --- & 3.002 & 15.743 \\
TC  & --- &  2.893 & 17.983 \\
TC aux & 64k & 2.892 & 15.894 \\
TC loss-free & 512k & 2.898 & 18.031 \\
\midrule
EC & 2k & 2.910 & 17.91 \\
EC & 8k & 2.845 & 18.83 \\
EC & 64k & \textbf{2.841} & 18.754 \\
EC & 512k & \textit{2.843} & \textbf{19.94} \\
\midrule
ET ($\beta{=}0.999$+warmup) & 0.5M $\rightarrow$ 500M & 2.844 & \textit{19.876} \\
\bottomrule
\end{tabularx}
\end{table}

\begin{table}[t]
\centering
\caption{d20 results. }
\label{tab:d20_results}
\small
\setlength{\tabcolsep}{4pt}
\begin{tabularx}{\columnwidth}{@{}>{\raggedright\arraybackslash}X c r r@{}}
\toprule
\textbf{Method} & \textbf{Batch} & \textbf{CE loss ($\downarrow$)} & \textbf{CORE ($\uparrow$)} \\
\midrule
dense & --- & 2.751 & 20.43 \\
TC aux & 32k & 2.687 & 22.31 \\
EC & 256k & \textit{2.621} & \textit{24.98} \\
ET ($\beta{=}0.999$+warmup) & 256k $\rightarrow$ 500M & \textbf{2.620} & \textbf{25.14} \\
\bottomrule
\end{tabularx}
\end{table}

\subsection{Analysis}
\label{sec:analysis}

We analyze key aspects of Expert Threshold routing through cutoff-usage tradeoff, dynamic computation allocation, expert specialization, and supporting EC comparisons on batch size scaling and train-evaluation gap.

\subsubsection{Cutoff vs Expert Usage Tradeoff}
\label{sec:cutoff_usage}

\begin{figure}[!tbp]
    \centering
    \includegraphics[width=\linewidth]{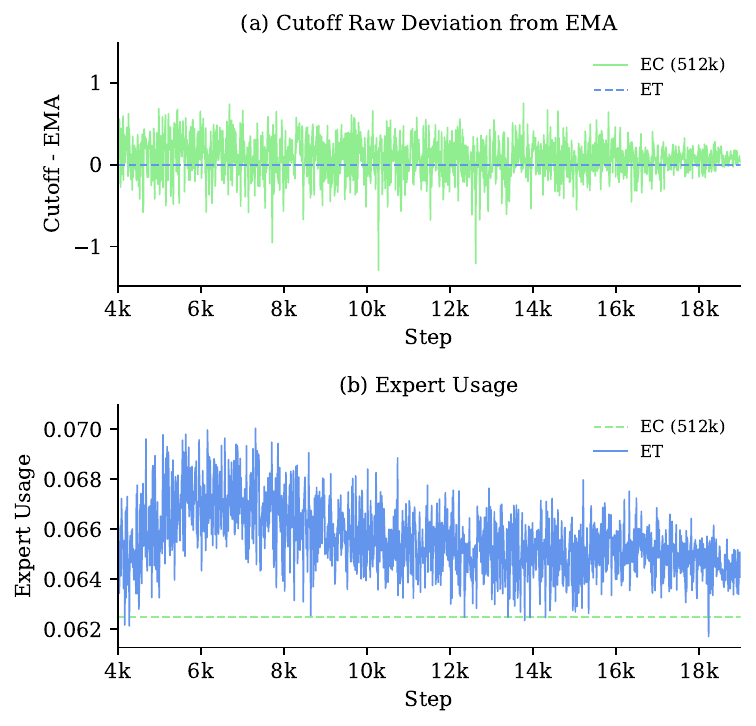}
    \caption{Cutoff stability vs expert usage tradeoff. \textbf{Top} Signed cutoff deviation relative to the EMA for EC at 512k batch size. ET stays at zero because routing uses the cutoff EMA directly. \textbf{Bottom} Expert usage for EC at 512k and ET. ET varies around the capacity target while EC remains constant. }
    \label{fig:cutoff_vs_usage}
\end{figure}

\begin{figure*}[!t]
    \centering
    \begin{minipage}{0.38\textwidth}
        \centering
        \includegraphics[width=\textwidth]{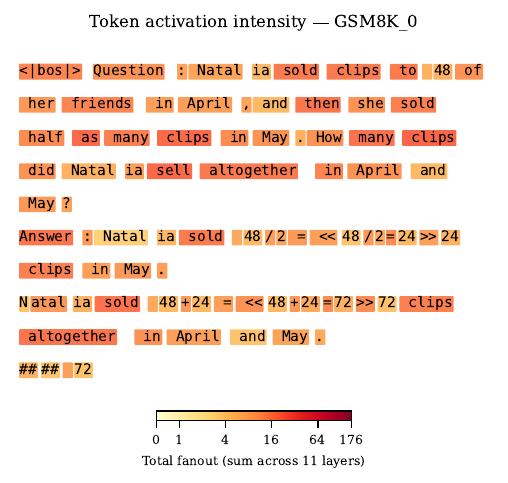}
        \subcaption{Per-token expert routing on a GSM8K passage.}
    \end{minipage}
    \hfill
    \begin{minipage}{0.58\textwidth}
        \centering
        \includegraphics[width=\textwidth]{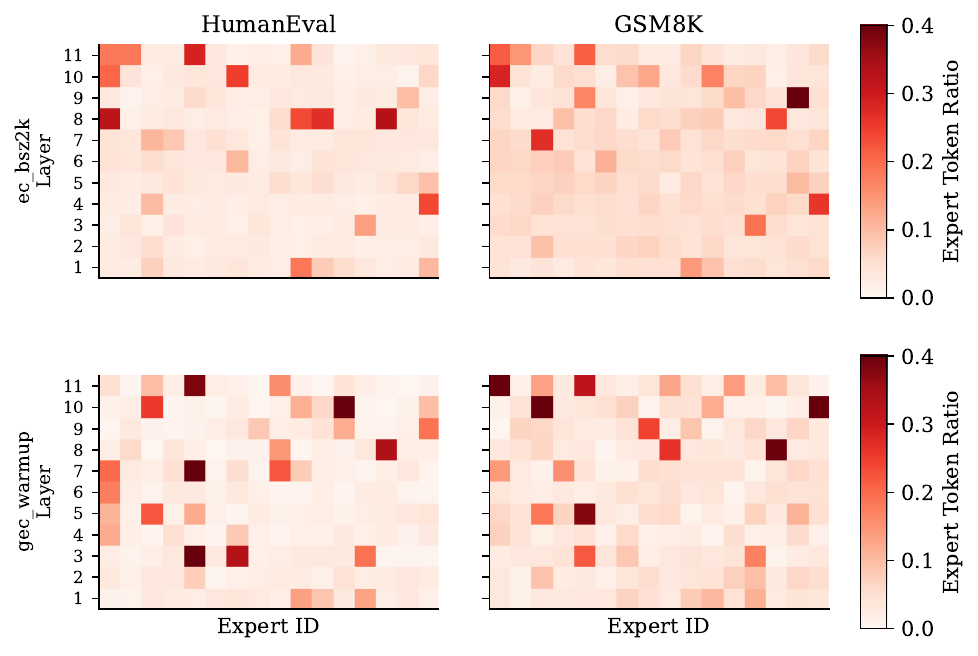}
        \subcaption{Expert activation heatmap. \textbf{Top:} EC with batch size 2k shows less specialization. \textbf{Bottom:} ET shows more extreme activation patterns, suggesting more domain-aware routing.}
    \end{minipage}
    \caption{Expert specialization analysis. (a)~Token-level activation intensity on a GSM8K passage, colored by total fanout (sum of experts activated across layers). The model assigns more computation to structurally important tokens (punctuation, sentence boundaries, numerical results) than to common content words. (b)~Expert token ratio heatmaps for HumanEval (code) and GSM8K (math). \textbf{Top:} EC (batch size 2k). \textbf{Bottom:} ET. ET achieves sharper patterns in expert activation, suggesting more domain-aware routing and specialization.}
    \label{fig:expert_specialization}
\end{figure*}

EC and ET achieve routing stability through complementary mechanisms. EC enforces a fixed expert usage: each expert selects exactly top-$k$ tokens, guaranteeing usage of $1/E$ per expert. However, the cutoff threshold varies batch-to-batch, with standard deviation scaling as $\mathcal{O}(1/\sqrt{N})$. ET inverts this tradeoff. The cutoff-EMA provides a stable threshold ($\beta = 0.999$), while expert usage fluctuates around the capacity target. Figure~\ref{fig:cutoff_vs_usage} shows the signed deviation between EC's per batch cutoff and cutoff-EMA, while ET remains at zero by design. This enables consistent inference without large-batch coordination. In essence, ET trades off hardware consistency for training-inference uniformity.

\subsubsection{Dynamic Computation Allocation}
\label{sec:dynamic_computation_allocation}

A key advantage of ET and EC is that they do not enforce a fixed amount of computation for every token. We here document its behavior and compare it with EC. For a more drastic comparison, we use the sequence-level EC with batch size 2k. Figure~\ref{fig:expert_specialization}(a) gives a qualitative example on a GSM8K passage~\citep{cobbe2021training}, where total fanout highlights tokens that receive heavier computation.

We further analyze how expert activation relates to position and token difficulty. Figure~\ref{fig_activation_dynamics_main} shows that both methods allocate more computation to early positions, but EC (2k) exhibits a dramatic spike at the first token (mean fanout $\sim$10) while ET shows a milder increase ($\sim$2) that decays smoothly. The lower row bins tokens by loss and overlays faint dashed layer traces with a denser global trend. For EC (2k), both the global curve and several layers rise with loss, showing that harder tokens receive more computation. ET remains flatter overall, with layer trajectories crossing and the global curve peaking in the middle before softening at higher loss. Additional layerwise views for the two main runs and extended comparisons for the remaining runs appear in Appendix~\ref{app_activation_dynamics_sweep}. 

\begin{figure*}[!t]
    \centering
    \begin{tabular}{@{}cc@{}}
        \includegraphics[width=0.48\textwidth]{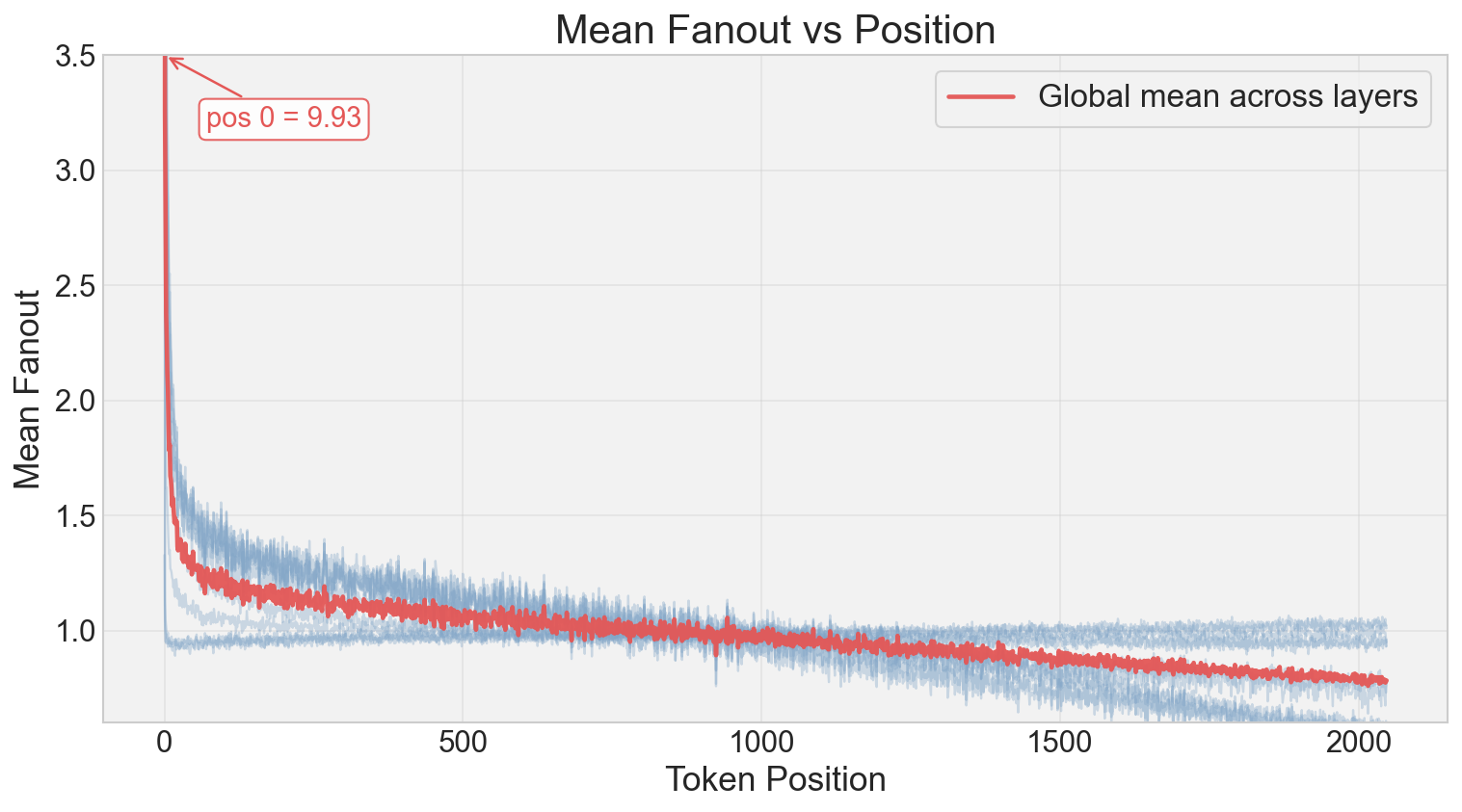} &
        \includegraphics[width=0.48\textwidth]{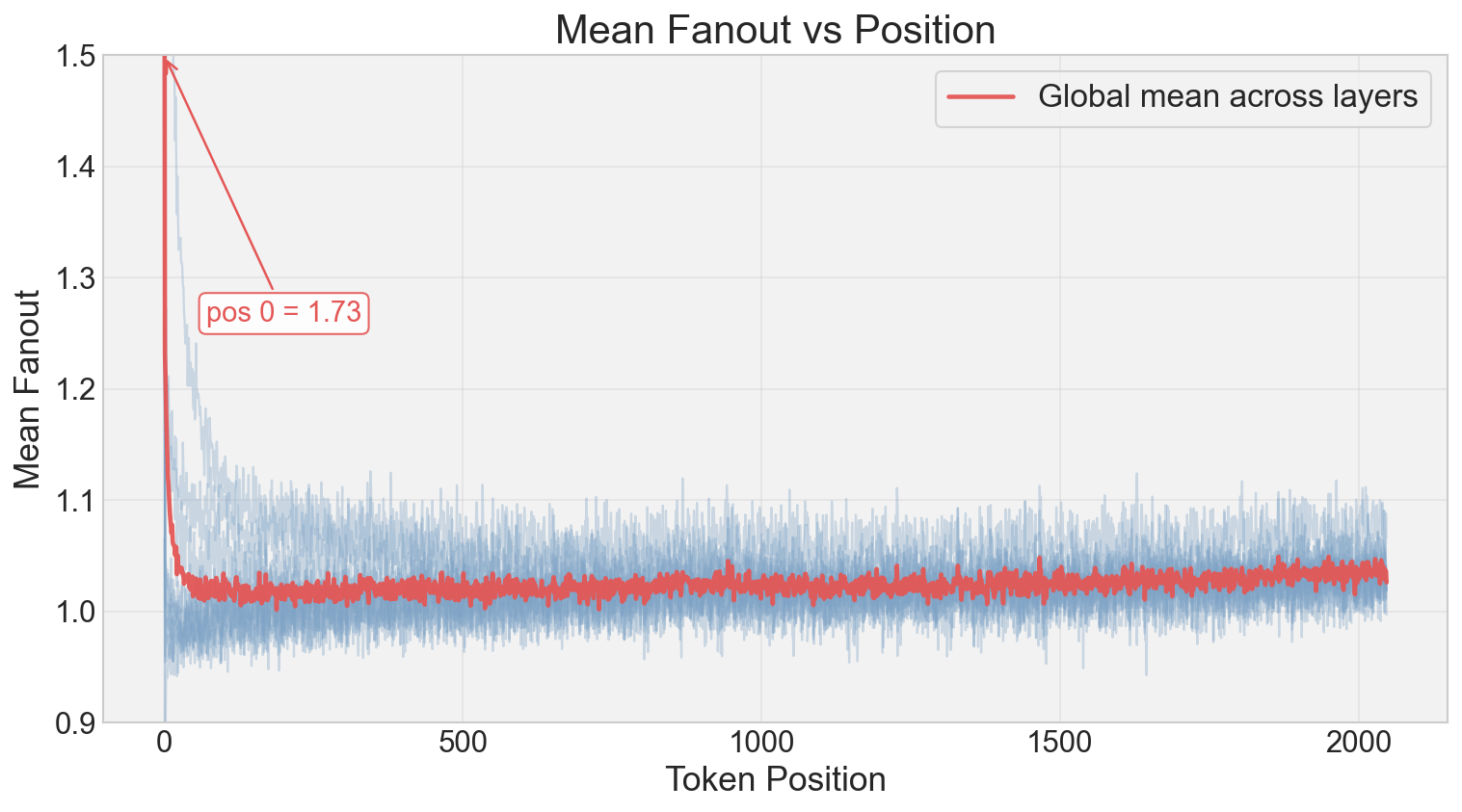} \\
        (a) EC (2k) fanout vs position & (b) ET fanout vs position \\[4pt]
        \includegraphics[width=0.48\textwidth]{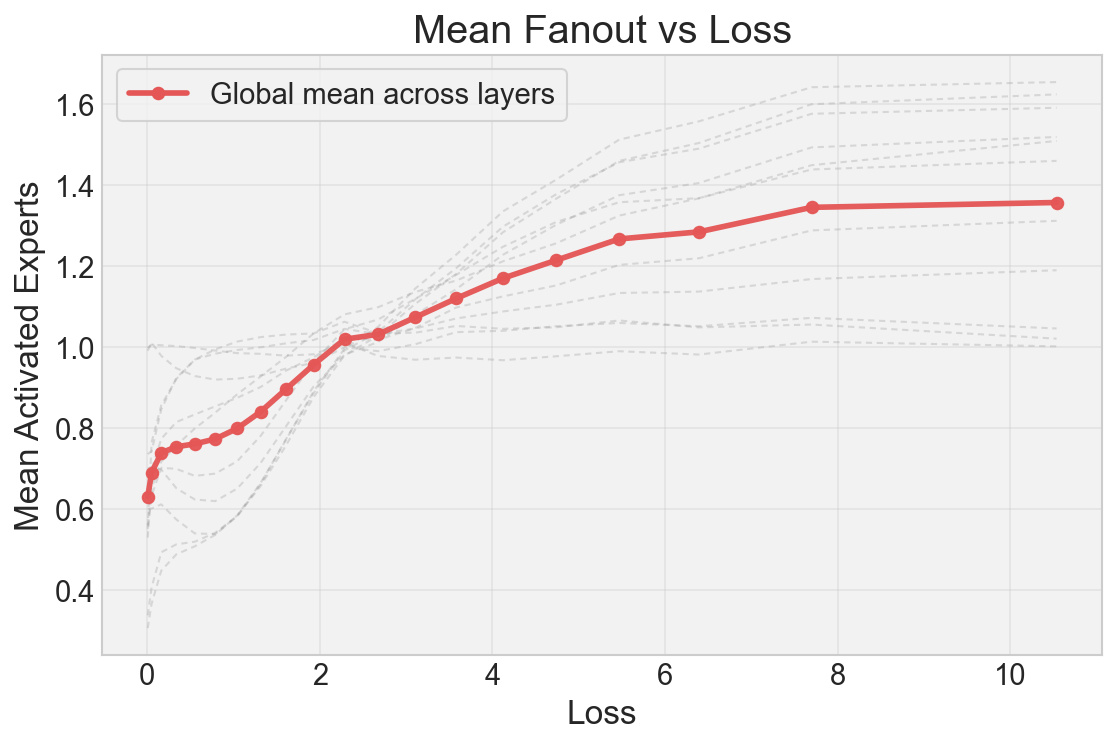} &
        \includegraphics[width=0.48\textwidth]{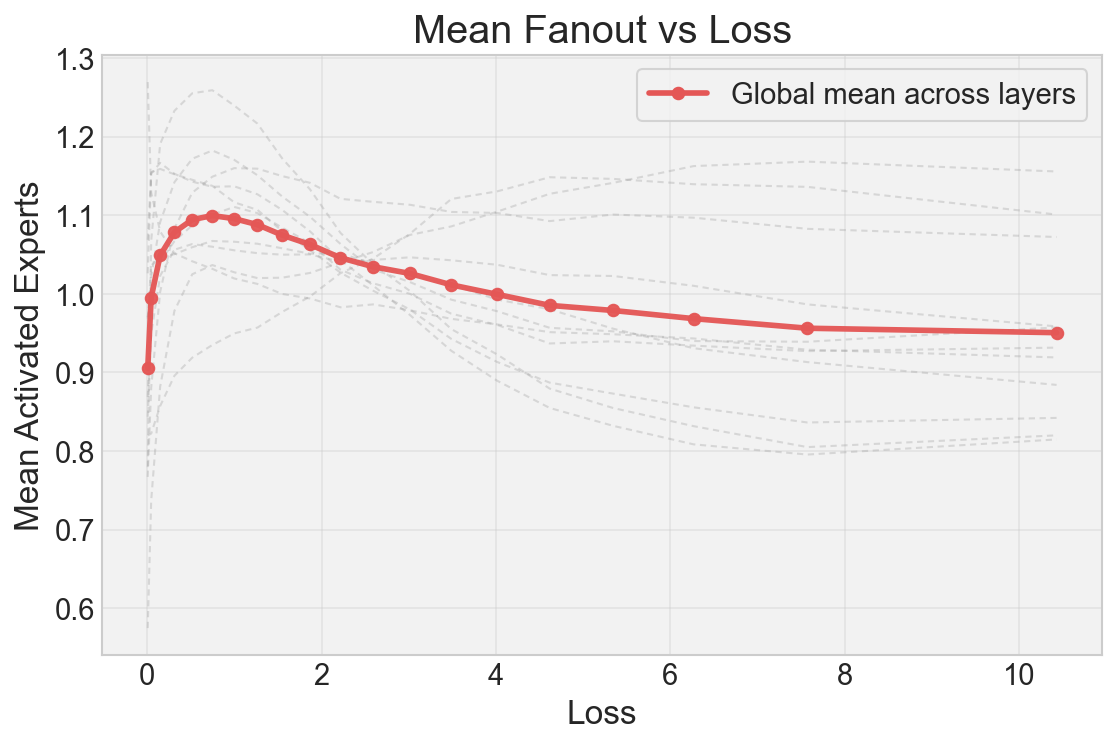} \\
        (c) EC (2k) fanout vs loss by layer & (d) ET fanout vs loss by layer \\
    \end{tabular}
    \caption{Activation dynamics for EC (2k) and ET. Both methods allocate more computation to early positions, with EC (2k) showing a sharper spike. Faint dashed curves show per-layer means and solid red curves show the global mean. When binned by loss, EC (2k) fanout rises monotonically while ET peaks early before declining.}
    \label{fig_activation_dynamics_main}
\end{figure*}

\subsubsection{Expert Specialization}
\label{sec:expert_specialization}

We follow Global LBL~\cite{qiu2025demonsdetailimplementingload} to evaluate expert specialization across EC with various batch sizes (2k, 8k, 64k, 512k) and ET. For each configuration, we measure the expert token ratio---the fraction of tokens from a given domain routed to each expert---across HumanEval~\citep{chen2021evaluating} (code) and GSM8K~\citep{cobbe2021training} (math) evaluation sets. Figure~\ref{fig:expert_specialization}(b) compares EC (batch size 2k) with ET. Both exhibit clear specialization: certain experts consistently attract domain-specific tokens, visible as concentrated dark cells in the heatmap. ET achieves specialization comparable to EC without requiring large-batch coordination at inference. The full comparison across all batch sizes (Appendix~\ref{app:additional_results}, Figure~\ref{fig:heatmap_combined}) shows that EC specialization sharpens with larger batches---patterns become more concentrated from 2k to 512k---while ET matches the large-batch EC pattern.

\subsubsection{Batch Size Scaling}
\label{sec:batch_size_scaling}

We hypothesize that larger batch sizes stabilize EC's cutoff threshold, yielding better performance and motivating ET's pursuit of the infinite-batch limit. Figure~\ref{fig:ec_bsz_scaling} confirms this trend across four batch sizes (2k, 8k, 64k, 512k tokens). Training CE loss improves from 2.874 (2k) to 2.844 (8k) to 2.836 (64k), with CORE Eval scores following suit (17.91 $\rightarrow$ 18.83 $\rightarrow$ 18.75). Top-$k$ selection over larger token pools better approximates the population-level routing decision, explaining this gain. However, performance saturates around 64k tokens, as increasing to 512k provides no further improvement (2.840 CE, 19.94 CORE Eval).

Figure~\ref{fig:ec_bsz_scaling} visualizes this scaling behavior. Notably, ET achieves comparable performance (2.844 CE, 19.876 CORE Eval) without requiring batch size coordination, making it practical for autoregressive inference where only single tokens are available.

\begin{figure}[!tbp]
    \centering
    \includegraphics[width=\linewidth]{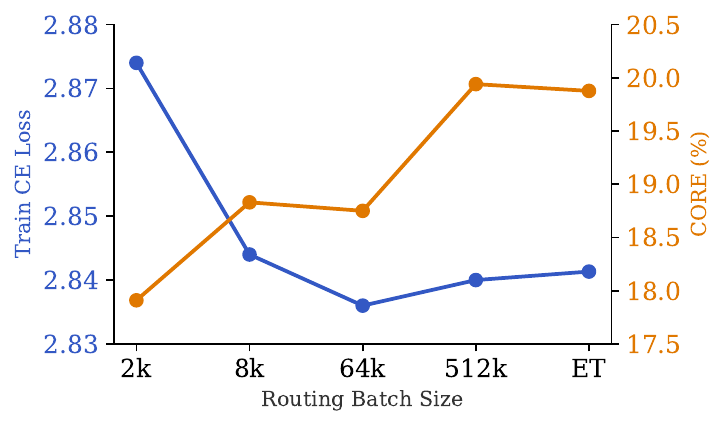}
    \caption{\color[HTML]{333333}EC performance across routing batch sizes. \textcolor[HTML]{3358C4}{Training CE loss} decreases and \textcolor[HTML]{E07800}{CORE Eval score} increases with larger batches. }
    \label{fig:ec_bsz_scaling}
\end{figure}

\subsubsection{Train-Evaluation Gap}
\label{sec:train_inference}

A key concern for Expert Choice is the train-inference discrepancy when using ET routing at inference. During training, EC selects the top-$k$ tokens for each expert within a batch; at inference, we apply ET's learned thresholds instead, since future tokens are unavailable for batch-level selection.

Our results demonstrate that this concern depends critically on the routing batch size. As shown in Table~\ref{tab:main_results}, EC with large batch sizes (64k, 512k) achieves validation loss nearly identical to ET (2.841--2.843 vs 2.844), with comparable CORE Eval scores. However, \textbf{smaller batch sizes reveal significant train-inference mismatch}: EC at 2k tokens shows degraded CORE Eval performance (17.91 vs 19.94 at 512k) and evaluation loss (2.910 vs 2.843). This gap arises because top-$k$ selection over a small batch is a noisy estimate of the population-level routing decision; at inference (batch size 1), this noise becomes extreme.

Figure~\ref{fig:train_vs_eval} illustrates this gap. EC (2k) shows a large train-evaluation discrepancy, while EC (512k) maintains close alignment between train loss EMA and eval loss. ET's cutoff-EMA mechanism addresses this by maintaining a population-level threshold that is independent of batch size, enabling consistent routing at inference without large-batch coordination.

\begin{figure}[!tbp]
    \centering
    \includegraphics[width=\linewidth]{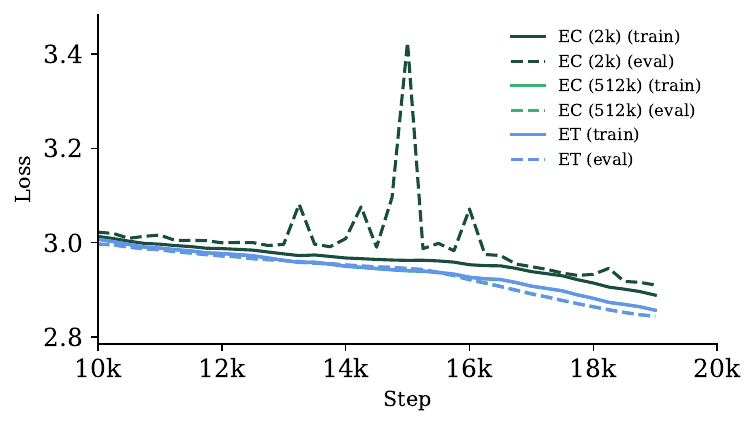}
    \caption{Train loss EMA and eval loss for EC at different batch sizes and ET. Solid lines show train loss EMA and dashed lines show eval loss. EC (2k) shows a large train-eval discrepancy, while EC (512k) and ET remain closely aligned.}
    \label{fig:train_vs_eval}
\end{figure}

\subsubsection{Routing Consistency Across Checkpoints}
\label{sec:routing_consistency}

To measure how stably each routing rule preserves token expert assignments over training, we compare the routed-expert sets assigned to the same token-layer pairs across checkpoints, excluding the always-active shared expert. We report weighted Jaccard over pooled token-layer-expert edges,
\[
\mathrm{weighted\_jaccard}
=
\frac{|E_A \cap E_B|}{|E_A \cup E_B|},
\]
where $E_A$ and $E_B$ are the pooled active token-layer-expert edges under two checkpoints. A higher weighted Jaccard indicates more similar routing behaviors between checkpoints. This gives the clearest separation while preserving the same qualitative ranking as the companion divergence views in Appendix~\ref{app:routing_consistency}.

\begin{figure*}[!t]
    \centering
    \includegraphics[width=\textwidth]{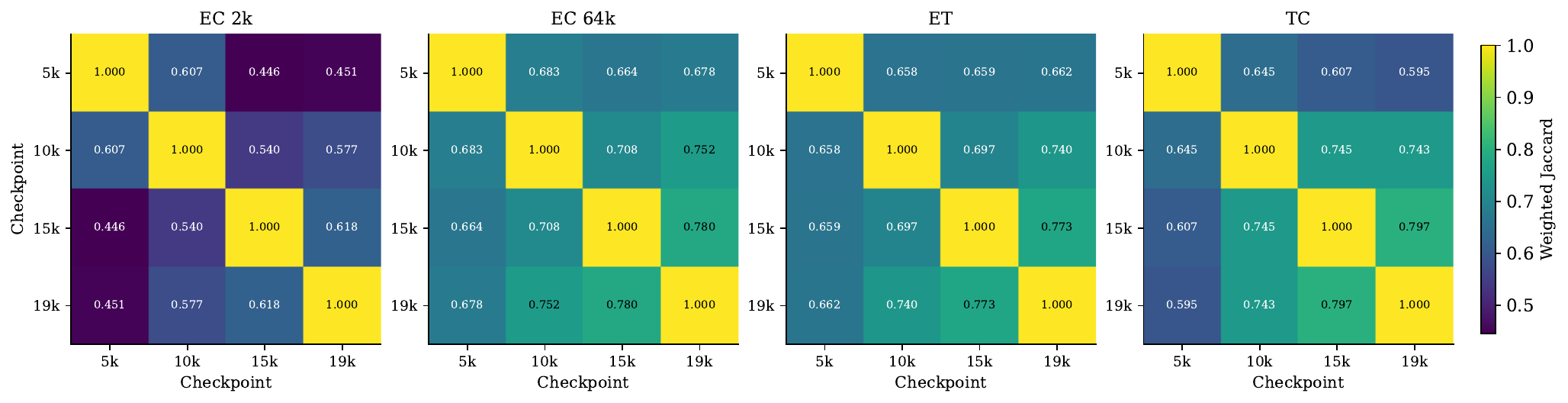}
    \caption{Within-family checkpoint-pair routing consistency on a fixed validation stream, measured by weighted Jaccard. ET is consistently more stable than EC 2k and stays close to EC 64k. TC is competitive on nearby checkpoints but degrades more on longer ranges.}
    \label{fig:routing_consistency_weighted_jaccard}
\end{figure*}

Figure~\ref{fig:routing_consistency_weighted_jaccard} shows a clear pattern. ET is above EC 2k on every checkpoint pair, indicating that threshold routing preserves its token-expert decisions much more consistently than small-pool EC. At the same time, ET remains close to EC 64k across the full matrix, which supports the view that ET tracks the large-pool EC regime without requiring large-batch coordination at inference. TC shows strong short-range consistency, but its longest-range pairs are weaker than ET, so it does not match the same large-pool EC behavior as cleanly. Appendix~\ref{app:routing_consistency} reports the complementary joint JSD heatmap.



\section{Related Work}

\subsection{Mixture of Experts}

Mixture of Experts (MoE) scales model capacity by routing each token to a small subset of experts while keeping compute nearly constant. A learned gate selects top-$G$ experts per token~\cite{shazeer2017outrageously}, with auxiliary losses to balance load across experts~\cite{lepikhin2021gshard}. The Switch Transformer~\cite{fedus2022switch} sets $G{=}1$ for efficiency. Recent LLMs further adopt fine-grained MoE with many small experts and shared experts that remain always active to capture global knowledge~\cite{dai2024deepseek}. We incorporate shared experts in our design.

\subsection{Load Balancing}

A critical challenge in MoE systems is load balancing, as routers often favor a small subset of experts without explicit constraints. The standard approach uses an auxiliary loss $\mathcal{L}_{\text{aux}} = \alpha \sum_{i} f_i P_i$ to encourage uniform expert assignment~\cite{lepikhin2021gshard, fedus2022switch}, where $f_i = \frac{E}{N}\sum_{t=1}^N z_{t,i}$ and $P_i = \frac{1}{N}\sum_{t=1}^N p_{t,i}$ are the normalized load and average routing probability for expert $i$. Minimizing this loss exerts unbalanced pressure to suppress the router logits based on the load statistics, which makes the router logits biased towards the less loaded experts.
However, in distributed training, small local batch sizes cause high variance in load estimation. Global-batch load balancing~\cite{qiu2025demonsdetailimplementingload,qwen3} addresses this by computing balance statistics across all devices, yielding more stable gradients and improved expert specialization. This insight motivates our approach to extend the ``global'' philosophy beyond auxiliary losses.


Recent work explores auxiliary-loss-free alternatives. DeepSeekMoE~\cite{dai2024deepseek} introduces expert-specific bias terms $b_i$ that dynamically adjust based on load statistics. Expert selection uses biased scores $r_{t,i} + b_i$, while gating weights use original scores $r_{t,i}$, preserving specialization. The bias updates follow $b_i \leftarrow b_i + u \cdot \text{sign}(1 - f_i)$, where $f_i$ is a normalized load statistic for expert $i$ (equal to 1 under perfect balance). This eliminates the trade-off between load balancing and task performance inherent in auxiliary loss methods. LongCat-Flash~\cite{longcat2025} adopts a similar framework but replaces the sign-based update with proportional control: $\Delta b_i = u \cdot (1 - f_i)$. While DeepSeek's approach applies constant-magnitude corrections regardless of imbalance severity, proportional updates scale with the load deviation, enabling smoother convergence. 

Expert Threshold (ET) combines the above ideas. Instead of a per-batch top-$k$ selection for the original EC, we extend Qwen's philosophy to compute balance statistics across the entire pretrain population by maintaining a distributional cutoff threshold using EMA. Such number, surprisingly, functions similarly to the bias term for loss-free load balancing. See Table~\ref{tab:conceptual_mapping} for more details.

\begin{table}[t]
\centering
\caption{Taxonomy of load balancing methods by scope (Aux loss~\cite{lepikhin2021gshard}; Global LBL~\cite{qiu2025demonsdetailimplementingload}; LossFree~\cite{wang2024auxiliarylossfreeloadbalancingstrategy}; Seq EC~\cite{zhou2022mixture}; Batch EC~\cite{ludziejewski2024scaling}).}
\label{tab:load_balance_taxonomy}
\small
\begin{tabularx}{\columnwidth}{@{}>{\centering\arraybackslash}X>{\centering\arraybackslash}X>{\centering\arraybackslash}X@{}}
\toprule
\textbf{Micro Batch/Seq} & \textbf{Batch} & \textbf{Population} \\
\midrule
Aux loss & Global LBL & -- \\
-- & -- & LossFree \\
Seq EC & Batch EC & ET (ours) \\
\bottomrule
\end{tabularx}
\end{table}

\begin{table}[t]
\centering
\caption{Conceptual connections between ET and recent work (LossFree~\cite{wang2024auxiliarylossfreeloadbalancingstrategy}; GShard~\cite{fedus2022switch}).}
\label{tab:conceptual_mapping}
\small
\begin{tabularx}{\columnwidth}{@{}>{\raggedright\arraybackslash}X>{\raggedright\arraybackslash}X>{\raggedright\arraybackslash}X@{}}
\toprule
\textbf{ET} & \textbf{Similar To} & \textbf{Connection} \\
\midrule
Cutoff-EMA $c_i$ & LossFree bias $b_i$ & Per-expert scalar; no aux loss \\
\midrule
$1 - \beta$ & LossFree $\mu$ & Update rate \\
\bottomrule
\end{tabularx}
\end{table}

\subsection{Dynamic Computation}

Dynamic computation methods adaptively allocate computational resources based on input complexity. Expert Choice (EC)~\cite{zhou2022mixture}, detailed in Section~\ref{sec:preliminaries}, achieves this by letting each expert select its top-$k$ tokens, enabling variable computation per token (0 to $GE$ experts). EC has been applied to upcycling dense checkpoints~\cite{komatsuzaki2023sparse}, attention layer skipping~\cite{raposo2024mixture}, vision~\cite{liu2024routersvision}, diffusion~\cite{sun2024ecdit, shi2025diffmoe}, and multimodal models~\cite{lin2024moma, ni2025openmoe2}. Related variants expand the design space~\cite{yan2025tcmoe}. However, EC's causality problem limits its use in autoregressive LLMs (Section~\ref{subsec:causal_generation_ec}).

Besides EC, other approaches to dynamic computation rely on other explicit designs. ReMoE~\cite{wang2025remoe} replaces discrete TopG routing with fully differentiable ReLU-based routing and adaptive L1 regularization.
Other works~\cite{jin2024moe++, longcat2025, zeng2024adamoe} introduce zero-computation experts (e.g., zero, copy, and constant) that allow tokens to skip expert computation entirely, an approach \citet{kilian2026datasparsity} extend to multimodal modeling.
Top-P routing~\cite{liu2024unimoeaudio, jin2025dtop_p, huang2024hardertasksneedexperts, wang2025hmoe} selects experts based on cumulative probability mass, adapting expert count to routing confidence, so high-confidence tokens use fewer experts while uncertain ones activate more.
XMoE~\cite{yang2024xmoe} is closest to our setting, replacing fixed Top-$G$ routing with a threshold that activates experts until cumulative routing mass exceeds a preset value. The key difference is that XMoE uses a fixed probability-mass threshold in token-choice MoE, while ET uses expert-specific EMA cutoffs to causalize expert choice. Auto-tuning methods like DynMoE~\cite{guo2025dynamicmixtureexpertsautotuning} also let each token determine how many experts to activate while reducing sensitivity to MoE hyperparameters. Beyond MoE routing itself, conditional computation can also be applied to other Transformer components and long-context settings, e.g., CoLT5~\cite{ainslie-etal-2023-colt5}.
Early exit methods~\cite{xin2020deebert} enable sample-level dynamics by allowing tokens to exit at intermediate layers.

\subsection{Causal Generation of Expert Choice Models}
\label{subsec:causal_generation_ec}

EC poses a causality challenge: token selection requires ranking against future tokens, which are unavailable in autoregressive generation.
Prior work addresses this issue in three main ways. Predictor-based methods train an auxiliary predictor or learn per-expert thresholds to approximate oracle top-$k$ decisions, enabling causal routing at inference~\cite{raposo2024mixture, shi2025diffmoe}. Alternatively, top-$k$ selection across the current tokens from different sequences preserves causality within each sequence~\cite{ludziejewski2024scaling, wen2025seqtopk}. Recent work changes routing granularity: Lory routes at the segment level, using the previous segment to determine the next~\cite{zhong2024lory}, while SeqTopK shifts expert budgets to sequence-level selection with an \emph{Expert Cache} for autoregressive decoding~\cite{wen2025seqtopk}.
All above approaches have significant drawbacks: predictions can be noisy and unstable, and batch-level top-$k$ can impose inference-time topology constraints, leading to a large train--inference mismatch; moreover, routing that depends on global batch composition can be sensitive to batch size/composition and raises privacy/safety concerns in multi-tenant settings~\cite{wen2025seqtopk}. In contrast to EC, ET reduces to a simple threshold test (whether token logit $r_{t,i}$ is higher than cutoff EMA $c_i$) at inference time, thus eliminating the train--inference discrepancy.

\subsection{From Batch to Population Level Statistics}

The progression from sample, batch, to population-level statistics is a recurring theme in deep learning. While techniques like Batch Normalization~\cite{ioffe2015batch} and contrastive learning~\cite{radford2021learning} rely on batch statistics, momentum-based approaches~\cite{he2020momentum,caron2021emerging} and adaptive optimizers like Adam~\cite{kingma2015adam} use Exponential Moving Averages (EMA) to approximate population distributions. 
ET applies this principle to routing via EMA-based cutoffs.

\section{Conclusion}

We introduce Expert Threshold (ET) routing, a mechanism that resolves the fundamental causality issue in Expert Choice (EC) models while preserving their load-balancing advantages. By maintaining an exponential moving average of each expert's selection threshold, estimated from historical batches rather than within-batch top-$k$ selection, ET routing enables fully causal routing. Each token's routing decision depends only on past statistics, eliminating the need for future token access at both training and inference time. Our experiments demonstrate that ET routing achieves competitive performance with EC routing (matching validation loss at 2.84) while outperforming Token Choice by 0.067 in cross-entropy loss, all while enabling causal autoregressive generation. The cutoff-EMA mechanism provides stable routing thresholds that accurately approximate EC's top-$k$ boundaries, as evidenced by the minimal train-inference gap observed across all metrics. We further show that a warmup strategy, using EC routing before transitioning to threshold-based selection, stabilizes early training dynamics. These findings suggest that the perceived incompatibility between Expert Choice routing and causal language modeling can be effectively bridged through population-level threshold estimation, opening new directions for scalable MoE architectures.

\section*{Acknowledgments}

We gratefully acknowledge the support of NVIDIA Corporation and the NVIDIA AI Technology Center (NVAITC) UF program. We thank Hongwu Peng for the generous support and guidance on development of the code. 

\section*{Impact Statement}

This paper presents work whose goal is to advance the field of Machine Learning. There are many potential societal consequences of our work, none which we feel must be specifically highlighted here.

\bibliography{example_paper}

@article{li2024dclm,
  title={DataComp-LM: In Search of the Next Generation of Training Sets for Language Models},
  author={Li, Jeffrey and Fang, Alex and Smber, Georgios and Wortsman, Mitchell and Gadre, Samir Yitzhak and Schmidt, Ludwig and others},
  journal={arXiv preprint arXiv:2406.11794},
  year={2024}
}

@article{chen2021evaluating,
  title={Evaluating Large Language Models Trained on Code},
  author={Chen, Mark and Tworek, Jerry and Jun, Heewoo and Yuan, Qiming and Pinto, Henrique Ponde de Oliveira and Kaplan, Jared and Edwards, Harri and Burda, Yuri and Joseph, Nicholas and Brockman, Greg and others},
  journal={arXiv preprint arXiv:2107.03374},
  year={2021}
}

@article{cobbe2021training,
  title={Training Verifiers to Solve Math Word Problems},
  author={Cobbe, Karl and Kosaraju, Vineet and Bavarian, Mohammad and Chen, Mark and Jun, Heewoo and Kaiser, Lukasz and Plappert, Matthias and Tworek, Jerry and Hilton, Jacob and Nakano, Reiichiro and others},
  journal={arXiv preprint arXiv:2110.14168},
  year={2021}
}

@article{raposo2024mixture,
  title={Mixture-of-Depths: Dynamically allocating compute in transformer-based language models},
  author={Raposo, David and Ritter, Sam and Richards, Blake and Lillicrap, Timothy and Humphreys, Peter Conway and Santoro, Adam},
  journal={arXiv preprint arXiv:2404.02258},
  year={2024}
}

@article{zhou2022mixture,
  title={Mixture-of-experts with expert choice routing},
  author={Zhou, Yanqi and Lei, Tao and Liu, Hanxiao and Du, Nan and Huang, Yanping and Zhao, Vincent and Dai, Andrew M and Le, Quoc V and Laudon, James and others},
  journal={Advances in Neural Information Processing Systems},
  volume={35},
  pages={7103--7114},
  year={2022}
}

@inproceedings{ainslie-etal-2023-colt5,
    title = "{C}o{LT}5: Faster Long-Range Transformers with Conditional Computation",
    author = "Ainslie, Joshua  and
      Lei, Tao  and
      de Jong, Michiel  and
      Ontanon, Santiago  and
      Brahma, Siddhartha  and
      Zemlyanskiy, Yury  and
      Uthus, David  and
      Guo, Mandy  and
      Lee-Thorp, James  and
      Tay, Yi  and
      Sung, Yun-Hsuan  and
      Sanghai, Sumit",
    editor = "Bouamor, Houda  and
      Pino, Juan  and
      Bali, Kalika",
    booktitle = "Proceedings of the 2023 Conference on Empirical Methods in Natural Language Processing",
    month = dec,
    year = "2023",
    address = "Singapore",
    publisher = "Association for Computational Linguistics",
    url = "https://aclanthology.org/2023.emnlp-main.309/",
    doi = "10.18653/v1/2023.emnlp-main.309",
    pages = "5085--5100"
}

@misc{huang2024hardertasksneedexperts,
      title={Harder Tasks Need More Experts: Dynamic Routing in MoE Models},
      author={Quzhe Huang and Zhenwei An and Nan Zhuang and Mingxu Tao and Chen Zhang and Yang Jin and Kun Xu and Liwei Chen and Songfang Huang and Yansong Feng},
      year={2024},
      eprint={2403.07652},
      archivePrefix={arXiv},
      primaryClass={cs.LG},
      url={https://arxiv.org/abs/2403.07652},
}

@inproceedings{
wang2025remoe,
title={ReMoE: Fully Differentiable Mixture-of-Experts with Re{LU} Routing},
author={Ziteng Wang and Jun Zhu and Jianfei Chen},
booktitle={The Thirteenth International Conference on Learning Representations},
year={2025},
url={https://openreview.net/forum?id=4D0f16Vwc3}
}

@misc{guo2025dynamicmixtureexpertsautotuning,
      title={Dynamic Mixture of Experts: An Auto-Tuning Approach for Efficient Transformer Models},
      author={Yongxin Guo and Zhenglin Cheng and Xiaoying Tang and Zhaopeng Tu and Tao Lin},
      year={2025},
      eprint={2405.14297},
      archivePrefix={arXiv},
      primaryClass={cs.LG},
      url={https://arxiv.org/abs/2405.14297},
}

@misc{liu2024unimoeaudio,
      title={UniMoE-Audio: Unified Speech and Music Generation with Dynamic-Capacity MoE},
      author={Zhenyu Liu and Yunxin Li and Xuanyu Zhang and Qixun Teng and Shenyuan Jiang and Xinyu Chen and Haoyuan Shi and Jinchao Li and Qi Wang and Haolan Chen and Fanbo Meng and Mingjun Zhao and Yu Xu and Yancheng He and Baotian Hu and Min Zhang},
      year={2025},
      eprint={2510.13344},
      archivePrefix={arXiv},
      primaryClass={cs.SD},
      url={https://arxiv.org/abs/2510.13344},
}

@misc{ni2025openmoe2,
      title={OpenMoE 2: Sparse Diffusion Language Models},
      author={Ni, Jinjie and team},
      year={2025},
      howpublished={\url{https://github.com/JinjieNi/OpenMoE2}},
}

@article{shazeer2017outrageously,
  title={Outrageously large neural networks: The sparsely-gated mixture-of-experts layer},
  author={Shazeer, Noam and Mirhoseini, Azalia and Maziarz, Krzysztof and Davis, Andy and Le, Quoc and Hinton, Geoffrey and Dean, Jeff},
  journal={arXiv preprint arXiv:1701.06538},
  year={2017}
}

@inproceedings{
lepikhin2021gshard,
title={{\{}GS{\}}hard: Scaling Giant Models with Conditional Computation and Automatic Sharding},
author={Dmitry Lepikhin and HyoukJoong Lee and Yuanzhong Xu and Dehao Chen and Orhan Firat and Yanping Huang and Maxim Krikun and Noam Shazeer and Zhifeng Chen},
booktitle={International Conference on Learning Representations},
year={2021},
url={https://openreview.net/forum?id=qrwe7XHTmYb}
}

@article{fedus2022switch,
  title={Switch transformers: Scaling to trillion parameter models with simple and efficient sparsity},
  author={Fedus, William and Zoph, Barret and Shazeer, Noam},
  journal={Journal of Machine Learning Research},
  volume={23},
  number={120},
  pages={1--39},
  year={2022}
}

@inproceedings{komatsuzaki2023sparse,
  title={Sparse Upcycling: Training Mixture-of-Experts from Dense Checkpoints},
  author={Komatsuzaki, Aran and Puigcerver, Joan and Lee-Thorp, James and Ruiz, Carlos Riquelme and Mustafa, Basil and Ainslie, Joshua and Tay, Yi and Dehghani, Mostafa and Houlsby, Neil},
  booktitle={International Conference on Learning Representations (ICLR)},
  year={2023},
  url={https://arxiv.org/abs/2212.05055}
}

@inproceedings{ludziejewski2024scaling,
  title={Scaling Laws for Fine-Grained Mixture of Experts},
  author={Ludziejewski, Jakub and Krajewski, Jakub and Adamczewski, Kamil and Pioro, Maciej and Chowdhury, Siddhartha and Sanyal, Arnab and Miasojedow, Bartlomiej and Pontes, Haya R and Jaszczur, Sebastian and Pacek, Bart and Jastrz{\k{e}}bski, Stanis{\l}aw and Bousquet, Olivier and Hoogeboom, Emiel and Michalewski, Henryk},
  booktitle={Proceedings of the 41st International Conference on Machine Learning},
  pages={32790--32809},
  year={2024},
  volume={235},
  series={Proceedings of Machine Learning Research},
  url={https://proceedings.mlr.press/v235/ludziejewski24a.html}
}

@misc{wang2024auxiliarylossfreeloadbalancingstrategy,
      title={Auxiliary-Loss-Free Load Balancing Strategy for Mixture-of-Experts},
      author={Lean Wang and Huazuo Gao and Chenggang Zhao and Xu Sun and Damai Dai},
      year={2024},
      eprint={2408.15664},
      archivePrefix={arXiv},
      primaryClass={cs.LG},
      url={https://arxiv.org/abs/2408.15664},
}

@misc{dai2024deepseek,
  title={DeepSeekMoE: Towards Ultimate Expert Specialization in Mixture-of-Experts Language Models},
  author={Dai, Damai and Deng, Chengqi and Zhao, Chenggang and Xu, R. X. and Gao, Huazuo and Chen, Deli and Li, Jiashi and Zeng, Wangding and Yu, Xingkai and Wu, Y. and Xie, Zhenda and Li, Y. K. and Huang, Panpan and Luo, Fuli and Cheng, A. and Zhang, Kai and Sui, Jie and Zhao, Xiaokang and Xing, Nuo and Peng, Zhiyuan and Jie, Shaosheng and Yang, Tong and Gao, Wei and Wang, Qi and Zeng, Yushi and Gao, Chong and Xiong, Runji and Sun, Xu},
  year={2024},
  eprint={2401.06066},
  archivePrefix={arXiv},
  primaryClass={cs.CL},
  url={https://arxiv.org/abs/2401.06066}
}

@misc{deepseekv3,
  title={DeepSeek-V3 Technical Report},
  author={DeepSeek-AI},
  year={2024},
  eprint={2412.19437},
  archivePrefix={arXiv},
  primaryClass={cs.CL},
  url={https://arxiv.org/abs/2412.19437}
}

@misc{radford2019language,
  title={Language Models are Unsupervised Multitask Learners},
  author={Radford, Alec and Wu, Jeff and Child, Rewon and Luan, David and Amodei, Dario and Sutskever, Ilya},
  year={2019},
  howpublished={OpenAI Blog},
  url={https://openai.com/blog/better-language-models/}
}

@misc{karpathy2025nanochat,
  title={nanochat: The best ChatGPT that \$100 can buy},
  author={Karpathy, Andrej},
  year={2025},
  howpublished={\url{https://github.com/karpathy/nanochat}},
  note={GitHub repository}
}

@inproceedings{
penedo2024the,
title={The FineWeb Datasets: Decanting the Web for the Finest Text Data at Scale},
author={Guilherme Penedo and Hynek Kydl{\'\i}{\v{c}}ek and Loubna Ben allal and Anton Lozhkov and Margaret Mitchell and Colin Raffel and Leandro Von Werra and Thomas Wolf},
booktitle={The Thirty-eight Conference on Neural Information Processing Systems Datasets and Benchmarks Track},
year={2024},
url={https://openreview.net/forum?id=n6SCkn2QaG}
}

@misc{sun2024ecdit,
  title={EC-DIT: Scaling Diffusion Transformers with Adaptive Expert-Choice Routing},
  author={Sun, Haotian and Lei, Tao and Zhang, Bowen and Li, Yanghao and Huang, Haoshuo and Pang, Ruoming and Dai, Bo and Du, Nan},
  year={2024},
  eprint={2410.02098},
  archivePrefix={arXiv},
  primaryClass={cs.CV},
  url={https://arxiv.org/abs/2410.02098}
}

@misc{shi2025diffmoe,
  title={DiffMoE: Dynamic Token Selection for Scalable Diffusion Transformers},
  author={Shi, Minglei and Yuan, Ziyang and Yang, Haotian and Wang, Xintao and Zheng, Mingwu and Tao, Xin and Zhao, Wenliang and Zheng, Wenzhao and Zhou, Jie and Lu, Jiwen and Wan, Pengfei and Zhang, Di and Gai, Kun},
  year={2025},
  eprint={2503.14487},
  archivePrefix={arXiv},
  primaryClass={cs.CV},
  url={https://arxiv.org/abs/2503.14487}
}

@misc{lin2024moma,
  title={MoMa: Efficient Early-Fusion Pre-training with Mixture of Modality-Aware Experts},
  author={Lin, Xi Victoria and Shrivastava, Akshat and Luo, Liang and Iyer, Srinivasan and Lewis, Mike and Ghosh, Gargi and Zettlemoyer, Luke and Aghajanyan, Armen},
  year={2024},
  eprint={2407.21770},
  archivePrefix={arXiv},
  primaryClass={cs.AI},
  url={https://arxiv.org/abs/2407.21770}
}

@inproceedings{xin2020deebert,
  title={DeeBERT: Dynamic Early Exiting for Accelerating BERT Inference},
  author={Xin, Ji and Tang, Raphael and Lee, Jaejun and Yu, Yaoliang and Lin, Jimmy},
  booktitle={Proceedings of the 58th Annual Meeting of the Association for Computational Linguistics},
  pages={2246--2251},
  year={2020}
}

@misc{jin2024moe++,
  title={MoE++: Accelerating Mixture-of-Experts Methods with Zero-Computation Experts},
  author={Jin, Peng and Zhu, Bo and Yuan, Li and Yan, Shuicheng},
  year={2024},
  eprint={2410.07348},
  archivePrefix={arXiv},
  primaryClass={cs.LG},
  url={https://arxiv.org/abs/2410.07348}
}

@misc{kilian2026datasparsity,
  title={Improving MoE Compute Efficiency by Composing Weight and Data Sparsity},
  author={Kilian, Maciej and Mkrtchyan, Oleg and Zettlemoyer, Luke and Shrivastava, Akshat and Aghajanyan, Armen},
  year={2026},
  eprint={2601.15370},
  archivePrefix={arXiv},
  primaryClass={cs.LG},
  url={https://arxiv.org/abs/2601.15370}
}

@misc{rajbhandari2020zero,
  title={ZeRO: Memory Optimizations Toward Training Trillion Parameter Models},
  author={Rajbhandari, Samyam and Rasley, Jeff and Ruwase, Olatunji and He, Yuxiong},
  year={2020},
  eprint={1910.02054},
  archivePrefix={arXiv},
  primaryClass={cs.LG},
  url={https://arxiv.org/abs/1910.02054}
}

@misc{tan2024scattermoe,
  title={Scattered Mixture-of-Experts Implementation},
  author={Tan, Shawn and Shen, Yikang and Panda, Rameswar and Courville, Aaron},
  year={2024},
  eprint={2403.08245},
  archivePrefix={arXiv},
  primaryClass={cs.LG},
  url={https://arxiv.org/abs/2403.08245}
}

@misc{qiu2025demonsdetailimplementingload,
  title={Demons in the Detail: On Implementing Load Balancing Loss for Training Specialized Mixture-of-Expert Models},
  author={Qiu, Zihan and Huang, Zeyu and Zheng, Bo and Wen, Kaiyue and Wang, Zekun and Men, Rui and Titov, Ivan and Liu, Dayiheng and Zhou, Jingren and Lin, Junyang},
  year={2025},
  eprint={2501.11873},
  archivePrefix={arXiv},
  primaryClass={cs.LG},
  url={https://arxiv.org/abs/2501.11873}
}

@misc{qwen3,
  title={Qwen3 Technical Report},
  author={Qwen Team},
  year={2025},
  eprint={2505.09388},
  archivePrefix={arXiv},
  primaryClass={cs.CL},
  url={https://arxiv.org/abs/2505.09388}
}

@misc{longcat2025,
  title={LongCat-Flash Technical Report},
  author={Meituan AI Team},
  year={2025},
  eprint={2509.01322},
  archivePrefix={arXiv},
  primaryClass={cs.CL},
  url={https://arxiv.org/abs/2509.01322}
}

@misc{jin2025dtop_p,
  title={Sparsity-Controllable Dynamic Top-p MoE for Large Foundation Model Pre-training},
  author={Jin, Can and Peng, Hongwu and Xiang, Mingcan and Zhang, Qixin and Yuan, Xiangchi and Hasan, Amit and Dibua, Ohiremen and Gong, Yifan and Kang, Yan and Metaxas, Dimitris N.},
  year={2025},
  eprint={2512.13996},
  archivePrefix={arXiv},
  primaryClass={cs.AI},
  doi={10.48550/arXiv.2512.13996},
  url={https://arxiv.org/abs/2512.13996}
}

@inproceedings{zeng2024adamoe,
  title={{A}da{M}o{E}: Token-Adaptive Routing with Null Experts for Mixture-of-Experts Language Models},
  author={Zeng, Zihao and Miao, Yibo and Gao, Hongcheng and Zhang, Hao and Deng, Zhijie},
  booktitle={Findings of the Association for Computational Linguistics: EMNLP 2024},
  editor={Al-Onaizan, Yaser and Bansal, Mohit and Chen, Yun-Nung},
  month=nov,
  year={2024},
  address={Miami, Florida, USA},
  publisher={Association for Computational Linguistics},
  url={https://aclanthology.org/2024.findings-emnlp.361/},
  doi={10.18653/v1/2024.findings-emnlp.361},
  pages={6223--6235}
}

@inproceedings{yan2025tcmoe,
  title={{TC}-{M}o{E}: Augmenting Mixture of Experts with Ternary Expert Choice},
  author={Yan, Shen and Bin, Xingyan and Zhang, Sijun and Wang, Yisen and Lin, Zhouchen},
  booktitle={International Conference on Learning Representations (ICLR)},
  year={2025},
  url={https://openreview.net/forum?id=dsP91M4hDL},
  note={Poster}
}

@inproceedings{yang2024xmoe,
  title={{XM}o{E}: Sparse Models with Fine-grained and Adaptive Expert Selection},
  author={Yang, Yuanhang and Qi, Shiyi and Gu, Wenchao and Wang, Chaozheng and Gao, Cuiyun and Xu, Zenglin},
  booktitle={Findings of the Association for Computational Linguistics: ACL 2024},
  editor={Ku, Lun-Wei and Martins, Andre and Srikumar, Vivek},
  month=aug,
  year={2024},
  address={Bangkok, Thailand},
  publisher={Association for Computational Linguistics},
  url={https://aclanthology.org/2024.findings-acl.694/},
  doi={10.18653/v1/2024.findings-acl.694},
  pages={11664--11674}
}

@inproceedings{wang2025hmoe,
  title={{HM}o{E}: Heterogeneous Mixture of Experts for Language Modeling},
  author={Wang, An and Sun, Xingwu and Xie, Ruobing and Li, Shuaipeng and Zhu, Jiaqi and Yang, Zhen and Zhao, Pinxue and Han, Weidong and Kang, Zhanhui and Wang, Di and Okazaki, Naoaki and Xu, Cheng-zhong},
  booktitle={Proceedings of the 2025 Conference on Empirical Methods in Natural Language Processing},
  editor={Christodoulopoulos, Christos and Chakraborty, Tanmoy and Rose, Carolyn and Peng, Violet},
  month=nov,
  year={2025},
  address={Suzhou, China},
  publisher={Association for Computational Linguistics},
  url={https://aclanthology.org/2025.emnlp-main.1115/},
  doi={10.18653/v1/2025.emnlp-main.1115},
  pages={21943--21957},
  isbn={979-8-89176-332-6}
}

@article{liu2024routersvision,
  title={Routers in Vision Mixture of Experts: An Empirical Study},
  author={Liu, Tianlin and Blondel, Mathieu and Riquelme Ruiz, Carlos and Puigcerver, Joan},
  journal={Transactions on Machine Learning Research},
  year={2024},
  url={https://openreview.net/forum?id=aHk3vctnf1},
  note={Also available as arXiv:2401.15969}
}

@inproceedings{zhong2024lory,
  title={Lory: Fully Differentiable Mixture-of-Experts for Autoregressive Language Model Pre-training},
  author={Zhong, Zexuan and Xia, Mengzhou and Chen, Danqi and Lewis, Mike},
  booktitle={Conference on Language Modeling (COLM)},
  year={2024},
  url={https://openreview.net/forum?id=LKEJPySnlt}
}

@misc{wen2025seqtopk,
  title={Route Experts by Sequence, not by Token},
  author={Wen, Tiansheng and Wang, Yifei and Feng, Aosong and Ma, Long and Liu, Xinyang and Wang, Yifan and Guo, Lixuan and Chen, Bo and Jegelka, Stefanie and You, Chenyu},
  year={2025},
  eprint={2511.06494},
  archivePrefix={arXiv},
  primaryClass={cs.LG},
  doi={10.48550/arXiv.2511.06494},
  url={https://arxiv.org/abs/2511.06494}
}

@misc{zhang2019rmsnorm,
  title={Root Mean Square Layer Normalization},
  author={Zhang, Biao and Sennrich, Rico},
  year={2019},
  eprint={1910.07467},
  archivePrefix={arXiv},
  primaryClass={cs.LG},
  url={https://arxiv.org/abs/1910.07467}
}

@article{su2024rope,
  title={RoFormer: Enhanced Transformer with Rotary Position Embedding},
  author={Su, Jianlin and Ahmed, Murtadha and Lu, Yu and Pan, Shengfeng and Bo, Wen and Liu, Yunfeng},
  year={2024},
  journal={Neurocomputing},
  volume={568},
  pages={127063},
  doi={10.1016/j.neucom.2023.127063},
  url={https://www.sciencedirect.com/science/article/pii/S0925231223011864}
}

@misc{ainslie2023gqa,
  title={GQA: Training Generalized Multi-Query Transformer Models from Multi-Head Checkpoints},
  author={Ainslie, Joshua and Lee-Thorp, James and de Jong, Michiel and Zemlyanskiy, Yury and Lebron, Federico and Sanghai, Sumit},
  year={2023},
  eprint={2305.13245},
  archivePrefix={arXiv},
  primaryClass={cs.CL},
  url={https://arxiv.org/abs/2305.13245}
}

@misc{jordan2024muon,
  title={Muon: An optimizer for hidden layers in neural networks},
  author={Jordan, Keller},
  year={2024},
  howpublished={\url{https://kellerjordan.github.io/posts/muon/}},
  note={Blog post}
}

@inproceedings{loshchilov2019adamw,
  title={Decoupled Weight Decay Regularization},
  author={Loshchilov, Ilya and Hutter, Frank},
  booktitle={International Conference on Learning Representations},
  year={2019},
  url={https://openreview.net/forum?id=Bkg6RiCqY7}
}

@misc{yang2022mup,
  title={Tensor Programs V: Tuning Large Neural Networks via Zero-Shot Hyperparameter Transfer},
  author={Yang, Greg and Hu, Edward J. and Babuschkin, Igor and Sidor, Szymon and Liu, Xiaodong and Farhi, David and Ryder, Nick and Pachocki, Jakub and Chen, Weizhu and Gao, Jianfeng},
  year={2022},
  eprint={2203.03466},
  archivePrefix={arXiv},
  primaryClass={cs.LG},
  url={https://arxiv.org/abs/2203.03466}
}

@misc{team2024gemma2,
  title={Gemma 2: Improving Open Language Models at a Practical Size},
  author={Gemma Team and Riviere, Morgane and Pathak, Shreya and Sessa, Pier Giuseppe and Cassirer, Cassidy and Coppey, Lucie and El-Boukkouri, Khalid and others},
  year={2024},
  eprint={2408.00118},
  archivePrefix={arXiv},
  primaryClass={cs.CL},
  url={https://arxiv.org/abs/2408.00118}
}

@misc{so2021primer,
  title={Primer: Searching for Efficient Transformers for Language Modeling},
  author={So, David R. and Mańke, Wojciech and Liu, Hanxiao and Dai, Zihang and Shazeer, Noam and Le, Quoc V.},
  year={2021},
  eprint={2109.08668},
  archivePrefix={arXiv},
  primaryClass={cs.LG},
  url={https://arxiv.org/abs/2109.08668}
}

@misc{dehghani2023vit22b,
  title={Scaling Vision Transformers to 22 Billion Parameters},
  author={Dehghani, Mostafa and Djolonga, Josip and Mustafa, Basil and Padlewski, Piotr and Heek, Jonathan and Gilmer, Justin and Steiner, Andreas and Caron, Mathilde and Geirhos, Robert and Alabdulmohsin, Ibrahim and others},
  year={2023},
  eprint={2302.05442},
  archivePrefix={arXiv},
  primaryClass={cs.CV},
  url={https://arxiv.org/abs/2302.05442}
}

@misc{liu2025moonlight,
  title={Moonlight: A Cost-Effective Approach for Pre-training Large Language Models},
  author={Liu, Haokun and Li, Yifan and Shen, Yingxia and Wang, Bo and Liang, Chen and Jiang, Chengyue and Li, Chonghua and Deng, Damai and Ding, Fangxiaoyu and Gao, Wei and others},
  year={2025},
  eprint={2502.16456},
  archivePrefix={arXiv},
  primaryClass={cs.CL},
  url={https://arxiv.org/abs/2502.16456}
}

@inproceedings{ioffe2015batch,
  title={Batch Normalization: Accelerating Deep Network Training by Reducing Internal Covariate Shift},
  author={Ioffe, Sergey and Szegedy, Christian},
  booktitle={International Conference on Machine Learning},
  pages={448--456},
  year={2015},
  organization={PMLR}
}

@inproceedings{radford2021learning,
  title={Learning Transferable Visual Models From Natural Language Supervision},
  author={Radford, Alec and Kim, Jong Wook and Hallacy, Chris and Ramesh, Aditya and Goh, Gabriel and Agarwal, Sandhini and Sastry, Girish and Askell, Amanda and Mishkin, Pamela and Clark, Jack and others},
  booktitle={International Conference on Machine Learning},
  pages={8748--8763},
  year={2021},
  organization={PMLR}
}

@inproceedings{he2020momentum,
  title={Momentum Contrast for Unsupervised Visual Representation Learning},
  author={He, Kaiming and Fan, Haoqi and Wu, Yuxin and Xie, Saining and Girshick, Ross},
  booktitle={IEEE/CVF Conference on Computer Vision and Pattern Recognition},
  pages={9729--9738},
  year={2020}
}

@inproceedings{caron2021emerging,
  title={Emerging Properties in Self-Supervised Vision Transformers},
  author={Caron, Mathilde and Touvron, Hugo and Misra, Ishan and J{\'e}gou, Herv{\'e} and Mairal, Julien and Bojanowski, Piotr and Joulin, Armand},
  booktitle={IEEE/CVF International Conference on Computer Vision},
  pages={9650--9660},
  year={2021}
}

@inproceedings{kingma2015adam,
  title={Adam: A Method for Stochastic Optimization},
  author={Kingma, Diederik P and Ba, Jimmy},
  booktitle={International Conference on Learning Representations},
  year={2015}
}
\bibliographystyle{icml2026}

\appendix

\section{Future Information Leakage for Expert Choice Models}
\label{app:future_leakage_ec}

In DeepSeek's loss-free load balancing paper~\citep{wang2024auxiliarylossfreeloadbalancingstrategy}, they give an upper bound on the future information leakage of Expert Choice (EC) to be superlinear in the number of tokens. They considered all potential selection combinations $\binom{N}{k}$ for choose $k=N/E$ tokens out of $N$ tokens, which makes upper bound $\log_2\binom{N}{k} = O(N \log N)$.

We consider two scenarios: when cutoff threshold is expressed as a finite precision float, it is trivial that the total future information leakage is at most the number of bits to represent the cutoff threshold. However, when cutoff threshold is of infinite precision, we show that we can indeed leak at least $O(N \log N)$ bits of future information, making the bound tight.

We arrange the following sections as follows:

\begin{enumerate}
    \item First, we provide a formal definition of future information leakage.
    \item Then, we show for finite precision, the total leakage is constant, which means per token leakage is 0 as batch size increases.
    \item Then, we show for infinite precision, we describe an encoding strategy that can leak at least $O(N \log N)$ bits of future information, making the bound tight. The idea is that we can break the cutoff space into $2^N$ small intervals, and injectively (though not surjectively) map each potential selection combination to a unique interval in a memoryless way. 
    \item We formally prove that the encoding strategy can leak at least $O(N \log N)$ bits of future information.
\end{enumerate}

Gladly, since we rely on finite precision float to represent the cutoff threshold, ET is still causal. 

\subsection{Definition of Future Information Leakage}

\begin{definition}[Future information leakage]
Fix a sequence length $N$ and a deterministic selection rule $F$ that maps a logit sequence $r_{1:N}$ to a subset $F(r_{1:N}) \subseteq [N]$ (where $[N] \triangleq \{1,\dots,N\}$). Here $r_{1:N}$ denotes the entire sequence of router scores/logits across tokens, and we write $z_t \triangleq \mathbf{1}\!\left[t\in F(r_{1:N})\right]$ for the induced selection indicator.

An \emph{advice variable} is any function $A=\alpha(r_{1:N})$ with finite range, where $\alpha$ is an \emph{encoder} that maps the full logit sequence to a finite label. We write $\mathrm{Range}(\alpha)\triangleq \{\alpha(r_{1:N}) : r_{1:N}\}$ for the set of labels that can be produced by $\alpha$.

We say $A$ \emph{causalizes} $F$ if there exist functions $\{g_t\}_{t=1}^N$ such that for all $r_{1:N}$ and all $t\in[N]$,
\[
z_t \;=\; g_t\!\left(r_{1:t}, A\right).
\]
The future information leakage of $F$ on length $N$ is
\[
\mathcal{L}_F([1\!:\!N]) \;\triangleq\; \min_{\alpha,\,\{g_t\}} \log_2\!\left|\mathrm{Range}(\alpha)\right|.
\]
\end{definition}

\subsection{Finite-precision cutoff implies constant leakage}

With this definition, it is trivial to show that, under finite precision float representation, the total future information leakage is constant, which means per token leakage is 0 as batch size increases.

In our case, $F$ is the Expert Choice selection rule induced by a cutoff threshold: for each expert, tokens are selected by comparing their router scores against the expert's cutoff (equivalently, selecting those above the cutoff, which matches top-$k$ when the cutoff is set to the $k$-th order statistic).

\begin{theorem}[Finite-precision cutoff implies constant leakage]
If the cutoff threshold is represented with $b$ bits of precision (e.g., $b=16$ for \texttt{bf16} or $b=32$ for \texttt{fp32}), then the future information leakage satisfies $\mathcal{L}_F([1\!:\!N]) \le b$ for all $N$.
\end{theorem}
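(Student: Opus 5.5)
The plan is to exhibit one explicit encoder $\alpha$ and one family of decoders $\{g_t\}$ that witness the minimum in the definition of $\mathcal{L}_F([1\!:\!N])$. Then $\mathcal{L}_F([1\!:\!N]) \le \log_2|\mathrm{Range}(\alpha)|$, and it suffices to make that range have at most $2^b$ elements.

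\textbf{Choice of advice.} Let $\alpha(r_{1:N}) \triangleq \hat c(r_{1:N})$, the cutoff threshold that $F$ actually uses, as stored in the $b$-bit format. Examples are the $k$-th order statistic of $r_{1:N}$ rounded to \texttt{bf16}/\texttt{fp32}, or any other cutoff the rule computes. By hypothesis $\hat c$ takes values in a fixed set $\mathcal{C}_b$ of representable numbers with $|\mathcal{C}_b| \le 2^b$. Hence $|\mathrm{Range}(\alpha)| \le 2^b$, independently of $N$.

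\textbf{Decoders.} The rule $F$ is assumed to be threshold-induced: $t \in F(r_{1:N}) \iff r_t > \hat c(r_{1:N})$, with the same comparison convention the implementation uses. We therefore set
\[
g_t(r_{1:t}, A) \triangleq \mathbf{1}\{r_t > A\}.
\]
This depends only on $r_t$, which is a coordinate of $r_{1:t}$, and on $A$. It reproduces $z_t$ exactly for every $r_{1:N}$ and every $t \in [N]$, so $A$ causalizes $F$ and $\mathcal{L}_F([1\!:\!N]) \le \log_2 2^b = b$. Dividing by $N$ shows the per-token leakage $b/N \to 0$.

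\textbf{Main obstacle.} The one place needing care is making sure that $F$ is literally ``compare against the finite-precision cutoff.'' If $F$ is exact top-$k$ with a real-valued $k$-th order statistic, rounding can change the selection near the boundary. Ties at the cutoff raise the same issue, since selecting exactly $k$ tokens can then require tie-breaking information that is not causal.

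I would handle this by stating the modeling assumption the paper makes: the selection rule is defined as thresholding against the stored $b$-bit cutoff, and the logits themselves are finite-precision, so the comparison is well defined. Under that definition the equality $z_t = g_t(r_{1:t}, A)$ is immediate.

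For exact top-$k$ with ties, I would either adopt a deterministic tie-breaking convention expressible through the threshold, namely strict ``$>$'' as in Algorithm~\ref{alg:gec}, or note that the theorem concerns the threshold-induced rule rather than exact-$k$ selection.
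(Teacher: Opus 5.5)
Your proposal is correct and is essentially the paper's proof: take the advice to be the $b$-bit cutoff itself, so $|\mathrm{Range}(\alpha)|\le 2^b$, and let $g_t$ threshold $r_t$ against it. Your remarks on rounding and ties state an assumption the paper leaves implicit, namely that $F$ is \emph{defined} as thresholding against the stored finite-precision cutoff rather than as exact top-$k$ selection.
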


\begin{proof}
This is an upper bound via a particular advice choice. Let the advice be the cutoff itself: $A=\alpha(r_{1:N}) \triangleq \beta$, encoded in $b$ bits, so $|\mathrm{Range}(\alpha)| \le 2^b$. Given $A=\beta$, the selection indicator at time $t$ is a causal function of the prefix (in fact, of $r_t$) and $\beta$ by thresholding, hence $A$ causalizes $F$. Therefore $\mathcal{L}_F([1\!:\!N]) \le \log_2 |\mathrm{Range}(\alpha)| \le b$.
\end{proof}

\subsection{Upper bound on Infinite-precision future information leakage}

The strategy we used in the previous subsection does not work for infinite precision cutoff, as it takes infinite bits of communication to represent the cutoff. Thus, we need a different encoding strategy.

Previously, Loss-free Load Balancing paper~\citet{wang2024auxiliarylossfreeloadbalancingstrategy} gave a combinatorial upper bound on the information carried by an Expert Choice allocation when we allow all admissible token-to-expert assignments consistent with the sparsity pattern.

Using the token-choice notation from Loss-free Load Balancing, let $N$ denote the number of tokens in the routing pool and $GE$ the number of routed experts. Each token activates $G$ routed experts, so the MoE sparsity is $\frac{1}{E}$. For an MoE layer in Expert Choice, the maximum information leakage $L$ (bits per token) is:
\begin{align*}
L
&= \frac{GE}{N}\log_2\!\binom{N}{\frac{N}{E}} \nonumber\\
&> \frac{GE}{N}\cdot\frac{N}{E}\log_2(E-1) \nonumber\\
&= G\log_2(E-1).
\tag{7}
\end{align*}

For a model with sparsity $\frac{1}{E}=\frac{2}{16}=0.125$ and 9 MoE layers, the total leakage information is more than 50 bits per token.

\subsection{Encoding Strategy and Decoding Procedure}

The above upper bound is pretty intuitive to understand. We ask, can we reach this bound? Surprisingly, we can. In this subsection, we describe an encoding strategy where we create an injective mapping from the set of all possible expert choice combinations to the range of the infinite precision cutoff.

Suppose the cutoff $c \in [0,1]$. We partition this space into $2^N$ dyadic intervals. Let $z \in \{0,1\}^N$ be the target expert choice combination (codeword). We map $z$ to the interval:
\[
I(z)\;\triangleq\;\Big[\sum_{t=1}^N (1-z_t)2^{-t},\ \sum_{t=1}^N (1-z_t)2^{-t} + 2^{-N}\Big).
\]
This orders codewords in descending order: $00\cdots 00$ maps to the rightmost interval and $11\cdots 11$ maps to the leftmost. Choose a cutoff value $c\in I(z)$. Obviously, this mapping is injective, i.e. each codeword maps to a unique interval. However, since not all combinations are possible, it is not surjective.

\paragraph{Information from the Past.}
For a token $t$, what information about the past routing logits $r_{1:t}$ and decisions $z_{1:t}$ can we use to guide the selection of the interval? Turns out, the only information we get is the current bracket $[\ell, u)$ containing the decision boundary. The upper bound $u$ is determined by the lowest logit of the selected tokens, and the lower bound $\ell$ by the highest logit of the unselected tokens. To maximize information leakage, we minimize help from the past by always choosing the next query logit to be in the middle of the interval, $r_t = (\ell+u)/2$. Then, the routing decision $z_t = \mathbf{1}\{r_t \ge c\}$ reveals exactly one bit of the cutoff, requiring full future information.

\paragraph{Decoding Procedure.}
We formalize this procedure in Algorithm~\ref{alg:cutoff_decode}.

\begin{algorithm}[t]
\caption{Binary-search decoding of an infinite-precision cutoff}
\label{alg:cutoff_decode}
\begin{algorithmic}[1]
\STATE \textbf{Input:} horizon $N$; unknown cutoff $c\in[0,1]$; oracle bit $z_t=\mathbf{1}\{r_t\ge c\}$
\STATE $\ell\gets 0,\ u\gets 1$
\FOR{$t=1,\ldots,N$}
\STATE $r_t\gets (\ell+u)/2$ \COMMENT{Query midpoint}
\STATE Observe selection $z_t \in \{0,1\}$
\IF{$z_t=1$}
\STATE $u\gets r_t$ \COMMENT{Selected ($r_t \ge c$) $\implies c \in [\ell, r_t)$}
\ELSE
\STATE $\ell\gets r_t$ \COMMENT{Not selected ($r_t < c$) $\implies c \in [r_t, u)$}
\ENDIF
\ENDFOR
\STATE \textbf{Return} $(z_1,\ldots,z_N)$ and interval $[\ell,u)$
\end{algorithmic}
\end{algorithm}

\begin{figure}[t]
\centering
\includegraphics[width=0.9\linewidth]{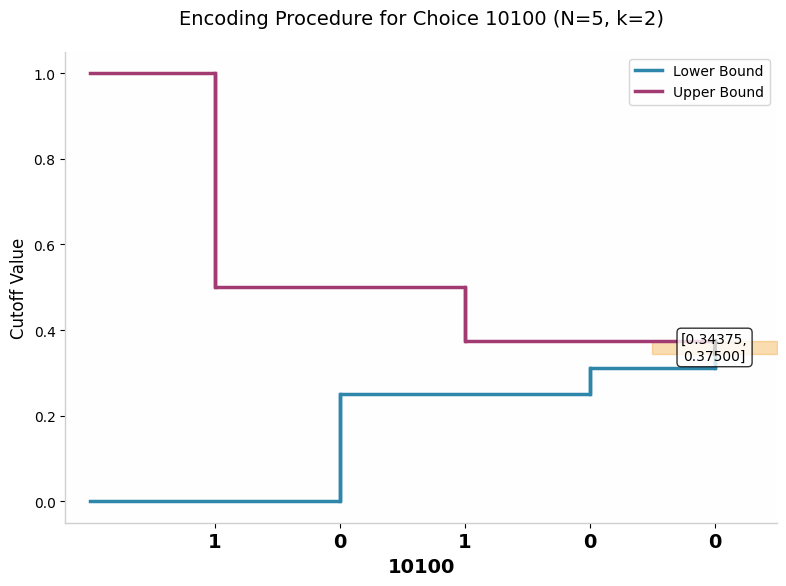}
\caption{Illustration of the binary-search encoding strategy. The cutoff value $c$ partitions the interval $[0,1]$ into regions corresponding to different expert selection patterns, enabling $N$ bits of future information to be encoded in a single real-valued threshold.}
\label{fig:encoding}
\end{figure}

\subsection{Formal Proof}

We now formally prove that the amount of information required to implement the Expert Choice selection causally is lower-bounded by the combinatorial entropy of the selection space. This confirms that the upper bound in Eq.~(7) is tight and that the infinite-precision construction in Algorithm~\ref{alg:cutoff_decode} is optimal in terms of information leakage.

\begin{theorem}
\label{thm:lower_bound}
For a single expert with capacity $k = \lfloor N/E \rfloor$, any causal routing mechanism that can realize all possible top-$k$ assignments requires at least $\log_2 \binom{N}{k}$ bits of non-causal information (advice).
\end{theorem}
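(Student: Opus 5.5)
The plan is a counting, pigeonhole-style argument built on the definition of future information leakage. Suppose an advice encoder $\alpha$ with finite range and causal decoders $\{g_t\}$ realize the top-$k$ rule $F$, so that $z_t = g_t(r_{1:t}, A)$ for all $t$ and all logit sequences. I will show that distinct top-$k$ selection sets force distinct advice labels. Once that is established, $|\mathrm{Range}(\alpha)| \ge \binom{N}{k}$, and taking $\log_2$ and minimizing over $(\alpha,\{g_t\})$ gives $\mathcal{L}_F([1\!:\!N]) \ge \log_2\binom{N}{k}$.

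The key step is to construct a family of logit sequences on which all the $z$-patterns occur while the causal prefixes give the decoder no help. I would reuse the binary-search construction of Algorithm~\ref{alg:cutoff_decode}, adapted to top-$k$.
\begin{itemize}
\item Fix any target subset $S\subseteq[N]$ with $|S|=k$.
\item Build $r^{S}_{1:N}$ adaptively. Keep a bracket $[\ell,u)$ starting at $[0,1)$, and at each step $t$ set $r_t=(\ell+u)/2$.
\item If $t\in S$, shrink to $[\ell,r_t)$ (the token is "above the cut"); otherwise shrink to $[r_t,u)$.
\end{itemize}
By construction every selected logit exceeds every unselected logit: each selected $r_t$ is an upper end of the bracket that all later logits lie below, and symmetrically for unselected ones. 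Hence the top-$k$ set of $r^S$ is exactly $S$, so $F(r^S)=S$, and all logits are distinct, so there are no ties.

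Now take two distinct subsets $S\neq S'$ and let $t^*$ be the first index where their indicators differ. The constructions depend only on past decisions, so $r^S_{1:t^*}=r^{S'}_{1:t^*}$: the midpoint queried at step $t^*$ depends only on $z_{1:t^*-1}$, which coincide. If $\alpha(r^S)=\alpha(r^{S'})=a$, then $g_{t^*}(r_{1:t^*},a)$ would have to output both $1$ and $0$, a contradiction. Therefore $\alpha$ is injective on the $\binom{N}{k}$ sequences $\{r^S\}$, which gives the bound. The result matches the upper bound in Eq.~(7) and shows the infinite-precision construction is tight.

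The main obstacle is the claim that the prefixes $r_{1:t^*}$ coincide up to the first disagreement. Everything else is routine, but this is where the adaptive, memoryless choice of $r_t$ as a function of the previous decisions alone must be made explicit.

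A secondary point is to check that the statement's phrase "realize all possible top-$k$ assignments" is interpreted as correctness of $F$ on every input, as in the definition of causalization. Also, when $k=\lfloor N/E\rfloor$ is not exact, the construction needs no change, since it works for any $k$.
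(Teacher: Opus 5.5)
Your proposal is correct and follows essentially the same route as the paper. Both use the binary-search (midpoint) family of logit sequences from Algorithm~\ref{alg:cutoff_decode}, observe that two patterns have identical prefixes $r_{1:t}$ up to their first differing index, and then argue by pigeonhole (equivalently, injectivity of $\alpha$) that each of the $\binom{N}{k}$ patterns needs its own advice label. Your explicit check that every selected logit exceeds every unselected one, so that $F(r^S)=S$ with no ties, fills in a detail the paper leaves implicit.
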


\begin{proof}
Let $\mathcal{Z}_k = \{z \in \{0,1\}^N : \sum_{t=1}^N z_t = k\}$ be the set of all valid selection indicators for the expert, with $|\mathcal{Z}_k| = \binom{N}{k}$. We show that distinct advice is necessary for every distinct pattern in $\mathcal{Z}_k$.

Consider the specific family of logit sequences generated by the binary search process in Algorithm~\ref{alg:cutoff_decode}. In this construction, the router logit $r_t$ is the midpoint of the current valid interval $[\ell, u)$, which is determined solely by the history of decisions $z_1, \dots, z_{t-1}$. Consequently, if two selection patterns $z$ and $z'$ share the same prefix $z_{1:t-1}$, they will generate the exact same logit $r_t$ at step $t$.

Suppose there exists an advice encoding with range size strictly less than $\binom{N}{k}$. By the Pigeonhole Principle, at least two distinct valid patterns $z, z' \in \mathcal{Z}_k$ must share the same advice value. Let $t$ be the first index where they differ (i.e., $z_{1:t-1} = z'_{1:t-1}$ but $z_t \neq z'_t$).
Since the prefixes are identical, the generated logit sequence $r_{1:t}$ is identical for both patterns. A causal decoder, which must output a decision at time $t$ based only on $r_{1:t}$ and the advice, receives identical inputs for both cases. It must therefore produce the same output. However, $z_t \neq z'_t$, so the decoder necessarily fails for at least one of the patterns.

Thus, every valid top-$k$ pattern requires a unique advice value. The minimum information leakage is $\log_2 \binom{N}{k}$ bits per expert.
\end{proof}

Summing this lower bound over $GE$ experts recovers the combinatorial quantity in Eq.~(7), proving that Expert Choice routing fundamentally requires significant future information to implement.

\section{Architecture Details}
\label{app:arch_diff}

Our model architecture follows nanochat~\cite{karpathy2025nanochat}, which differs from standard GPT-2~\cite{radford2019language} in several ways. Table~\ref{tab:arch_config} summarizes the key differences.

\begin{table*}[t]
\centering
\caption{Model architecture and size configurations. Architecture features are shared between d12 and d20 (nanochat-style). For MoE variants with $G{=}1$, $E{=}16$: 16 routed experts + 1 shared = 17 total experts. Total params include all expert parameters; active params include only the shared expert plus on average one routed expert per token.}
\label{tab:arch_config}
\small
\begin{tabular}{@{}l|>{\centering\arraybackslash}p{3.5cm}|>{\centering\arraybackslash}p{3cm}|>{\centering\arraybackslash}p{3cm}@{}}
\toprule
\textbf{Feature} & \textbf{GPT-2} & \textbf{d12} & \textbf{d20} \\
\midrule
Tokenization & GPT-2 & \multicolumn{2}{c|}{RustBPE} \\
Vocab Size & 50,257 & \multicolumn{2}{c|}{65,536} \\

Activation & GELU & \multicolumn{2}{c|}{ReLU$^2$~\cite{so2021primer}} \\
Normalization & LayerNorm & \multicolumn{2}{c|}{RMSNorm~\cite{zhang2019rmsnorm}} \\
FFN Dimension & $4 \times d_{\text{model}}$ & \multicolumn{2}{c|}{$4 \times d_{\text{model}}$} \\
Linear Layer Bias & Yes & \multicolumn{2}{c|}{No} \\
Position Encoding & Learned & \multicolumn{2}{c|}{RoPE~\cite{su2024rope}} \\
Head Dimension & 64 & \multicolumn{2}{c|}{128} \\
QK Normalization & No & \multicolumn{2}{c|}{Yes~\cite{dehghani2023vit22b}} \\
Logits Softcapping & No & \multicolumn{2}{c|}{15.0~\cite{team2024gemma2}} \\
Embedding Weights & Tied (wte = lm\_head) & \multicolumn{2}{c|}{Untied} \\

First Layer Dense & --- & \multicolumn{2}{c|}{Yes} \\
\midrule
$n_{\text{embd}}$ & 768 & 768 & 1280 \\
$n_{\text{layer}}$ & 12 & 12 & 20 \\
$n_{\text{head}}$ & 12 & 6 & 10 \\
KV heads & 12 & 2 & 2 \\
Dense Params & 124M & 195M & 561M \\
MoE Total Params & --- & 575M & 2429M \\
MoE Active Params & --- & 195M & 561M \\
\bottomrule
\end{tabular}
\end{table*}

\subsection{Data and Tokenization}

We train on the FineWeb-Edu 100B shuffle dataset~\cite{penedo2024the}, a high-quality educational web corpus. Tokenization uses RustBPE with a vocabulary of 65,536 tokens (64k, power-of-2 aligned for GPU efficiency). We use a sequence length of 2048 tokens.

\subsection{Model Size Configurations}
\label{app:model_sizes}

Table~\ref{tab:arch_config} shows the model configurations used in our experiments. We follow the nanochat naming convention where \texttt{d*} indicates the number of layers, and $n_{\text{embd}} = d \times 64$, with head dimension fixed at 128.

\paragraph{Attention configuration.} We use grouped-query attention~\cite{ainslie2023gqa} with head dimension 128 (larger than GPT-2's 64). The number of attention heads is $n_{\text{head}} = n_{\text{embd}} / 128$, giving 6 heads for d12 and 10 heads for d20. QK normalization is applied before the attention computation.

\paragraph{MoE configuration.} For MoE variants, we use 16 routed experts with granularity $G{=}1$ and expansion $E{=}16$, plus 1 shared expert (17 total). Each routed and shared expert has dimension $d_{\text{expert}} = 2 \times n_{\text{embd}}$ (half the dense FFN dimension). The shared expert processes every token, while each token activates on average 1 routed expert, matching the dense model's active parameter count.

\section{Training Setup Details}
\label{app:training_setup}

\subsection{Training Hyperparameters}

\begin{table}[h]
\centering
\caption{Training hyperparameters.}
\label{tab:training_hyperparams}
\small
\begin{tabular}{@{}ll@{}}
\toprule
\textbf{Hyperparameter} & \textbf{Value} \\
\midrule
Total tokens & 10B / 11.2B \\
Batch size (tokens) & 524,288 (0.5M) \\
Sequence length & 2048 \\
Muon Warmup steps & No warmup\\
AdamW Warmup steps & 250 \\
Learning rate schedule & Linear decay \\
Min learning rate & $0.1 \times$ peak LR \\
Gradient clipping & None \\
Weight decay & 0.0 \\
AdamW $\beta_1, \beta_2$ & 0.9, 0.95 \\
\bottomrule
\end{tabular}
\end{table}

\subsubsection{Weight Initialization}

We follow the nanochat initialization scheme~\cite{karpathy2025nanochat}, which uses aspect-ratio scaled initialization. For a weight matrix $W \in \mathbb{R}^{d_{\text{out}} \times d_{\text{in}}}$:

\begin{equation}
\text{std} = \frac{1}{\sqrt{d_{\text{in}}}} \cdot \min\left(1, \sqrt{\frac{d_{\text{out}}}{d_{\text{in}}}}\right)
\end{equation}

This formula reduces to standard $1/\sqrt{d_{\text{in}}}$ initialization for square or tall matrices, but scales down variance for wide matrices where $d_{\text{out}} \gg d_{\text{in}}$.

\paragraph{Component-specific initialization:}
\begin{itemize}
    \item \textbf{Embeddings}: $\mathcal{N}(0, 1)$ -- standard normal initialization
    \item \textbf{Output projections} (\texttt{lm\_head}, \texttt{c\_proj}): Zero initialization, critical for Muon optimizer stability
    \item \textbf{Router weights}: $\mathcal{N}(0, 1/\sqrt{d_{\text{in}}})$ -- small init for symmetry breaking
    \item \textbf{Expert weights}: Aspect-ratio scaled for up projections, zero for down projections
    \item \textbf{Attention weights}: Aspect-ratio scaled as above
\end{itemize}

\begin{table}[h]
\centering
\caption{Parameter initialization and optimizer configuration. Aspect-ratio scaled init.\ uses $\text{std} = d_{\text{in}}^{-1/2} \cdot \min(1, \sqrt{d_{\text{out}}/d_{\text{in}}})$. LRs include $\mu$P~\cite{yang2022mup} scaling $\lambda = (d_{\text{model}}/768)^{-1/2}$.}
\label{tab:param_init_lr}
\footnotesize
\begin{tabular}{@{}lccc@{}}
\toprule
\textbf{Parameter} & \textbf{Init.} & \textbf{LR} & \textbf{Opt.} \\
\midrule
$W_E$ (embed.) & $\mathcal{N}(0, 1)$ & $0.2\lambda$ & AdamW \\
$W_{\text{lm}}$ (head) & $\mathbf{0}$ & $0.004\lambda$ & AdamW \\
\midrule
$W_{QKV}$ (attn) & Asp.-ratio & $0.02\lambda$ & Muon \\
$W_O$ (attn proj) & $\mathbf{0}$ & $0.02\lambda$ & Muon \\
\midrule
$W_{\text{router}}$ & $\mathcal{N}(0, d^{-\frac{1}{2}})$ & $0.02\lambda$ & Muon \\
$W^{(e)}_{\uparrow}$ (exp.\ up) & Asp.-ratio & $0.02\lambda$ & Muon \\
$W^{(e)}_{\downarrow}$ (exp.\ dn) & $\mathbf{0}$ & $0.02\lambda$ & Muon \\
$W^{(s)}_{\uparrow}$ (shd.\ up) & Asp.-ratio & $0.02\lambda$ & Muon \\
$W^{(s)}_{\downarrow}$ (shd.\ dn) & $\mathbf{0}$ & $0.02\lambda$ & Muon \\
\bottomrule
\end{tabular}
\end{table}

\subsubsection{Optimizer Configuration}

We use a hybrid optimizer setup following nanochat~\cite{jordan2024muon, liu2025moonlight}:
\begin{itemize}
    \item \textbf{Muon}~\cite{jordan2024muon} for 2D/3D weight matrices (attention, MLP, experts): momentum-based optimizer with Newton-Schulz orthogonalization
    \item \textbf{AdamW}~\cite{loshchilov2019adamw} for embeddings and output head: with learning rate scaling $\propto 1/\sqrt{d_{\text{model}}/768}$
\end{itemize}

No weight decay is used, as Muon provides implicit regularization and language models benefit from memorization.

\subsection{Hardware Infrastructure Details}

\paragraph{Hardware.} We train our models on a single node with 8x NVIDIA B200 GPUs, each with 180GB of memory. 

\paragraph{Code.} For TC models, we use the ScatterMoE backend \citep{tan2024scattermoe}. For EC and ET models, we write our own custom Pytorch MoE implementation. We use padding to handle variable number of tokens per expert. 

\paragraph{Parallelism.} We rely on Nanochat's implementation of distributed AdamW and Muon optimizer use a ZeRO-2 style gradient synchronization \citep{rajbhandari2020zero}. For EC and ET models, we write our own expert parallelization all-to-all communication framework. This allows us to use maximum batch size during routing instead of micro-batches, reaching a better usage/cutoff variance trade-off while saving memory.

\section{CORE Evaluation Details}
\label{app:coreeval}

We evaluate using the CORE benchmark~\cite{li2024dclm}, which provides a standardized suite of in-context learning tasks for language model evaluation.

\paragraph{Task types.} The CORE benchmark includes multiple-choice tasks, schema matching tasks, and language modeling tasks, testing various aspects of language understanding.

\paragraph{Metric.} The primary metric is \emph{centered accuracy}, which adjusts for random baseline performance:
\begin{equation}
\text{acc}_{\text{centered}} = \frac{\text{acc} - 0.01 \times \text{baseline}_{\text{random}}}{1.0 - 0.01 \times \text{baseline}_{\text{random}}}
\end{equation}
This normalization ensures that random guessing yields a score near zero, while perfect accuracy yields 1.0. The final CORE Eval score is the mean of centered accuracy across all tasks.

\paragraph{Evaluation protocol.} We evaluate at fixed intervals during training (every 250 steps by default) to track learning dynamics.

\section{Ablations}
\label{app:ablations}

\subsection{Warmup}

We find warmup crucial for ET. In the early stages of training, the cutoff threshold is not yet stable, while the EMA lags behind the actual cutoff threshold because of slow update speed ($1/(1-\beta) \approx 1000$ steps). As a result, the threshold-based routing becomes unreliable: tokens that should be routed are dropped, and the capacity lower bound is frequently triggered (Figure~\ref{fig:warmup}c). This leads to undertrained experts during early training. To address this, we warm up the routing by using TopK selection for the first 4,000 steps before switching to threshold-based routing. ET no warmup relies solely on the capacity factor during these early steps, which is suboptimal because capacity control can limit collapse but does not provide a stable threshold estimate or balanced expert learning signal before the cutoff EMA has converged. As shown in Figure~\ref{fig:warmup}, warmup stabilizes the cutoff-EMA trajectory (a, d), increases raw expert usage (b), and reduces starvation rate (c). We also observe that ET no warmup exhibits higher variance in both logits (e) and gate outputs (f), suggesting less stable gradient signals.

\begin{figure*}[t]
    \centering
    \begin{tabular}{@{}ccc@{}}
        \includegraphics[width=0.32\textwidth]{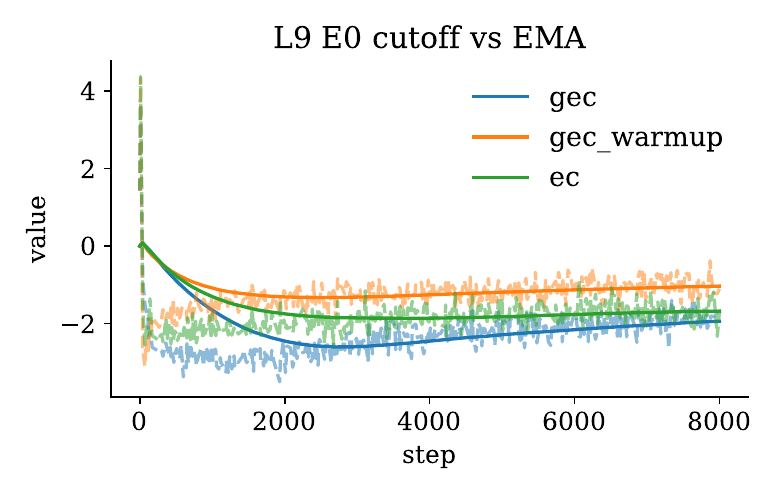} &
        \includegraphics[width=0.32\textwidth]{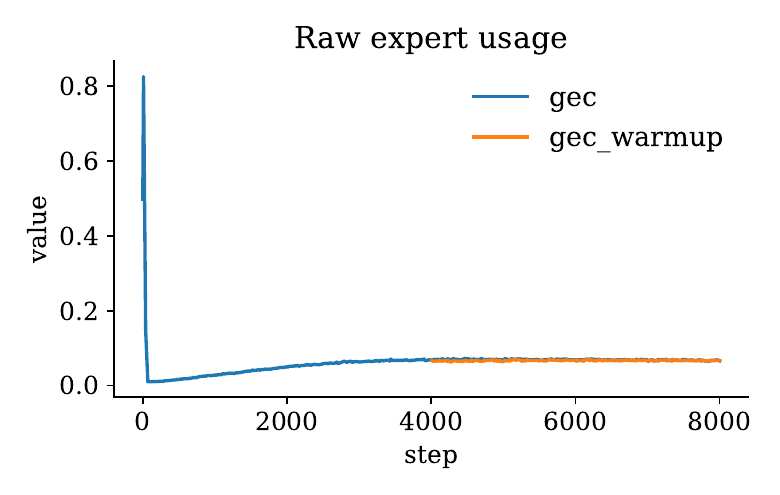} &
        \includegraphics[width=0.32\textwidth]{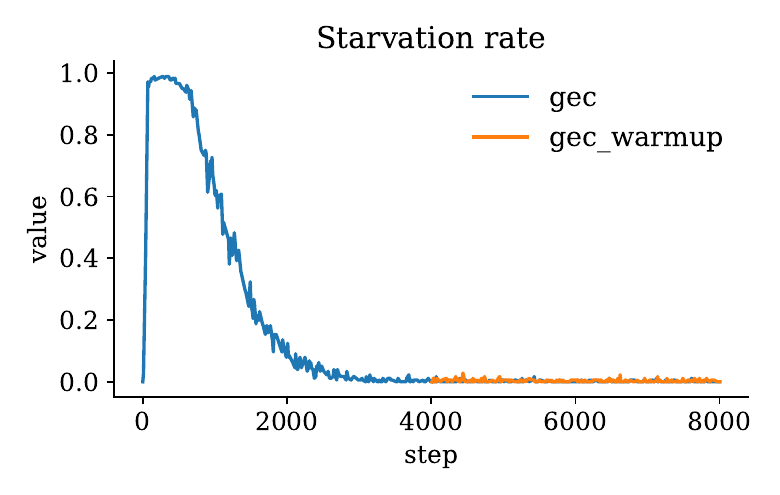} \\
        (a) L9 cutoff vs EMA & (b) Raw expert usage & (c) Starvation rate \\[6pt]
        \includegraphics[width=0.32\textwidth]{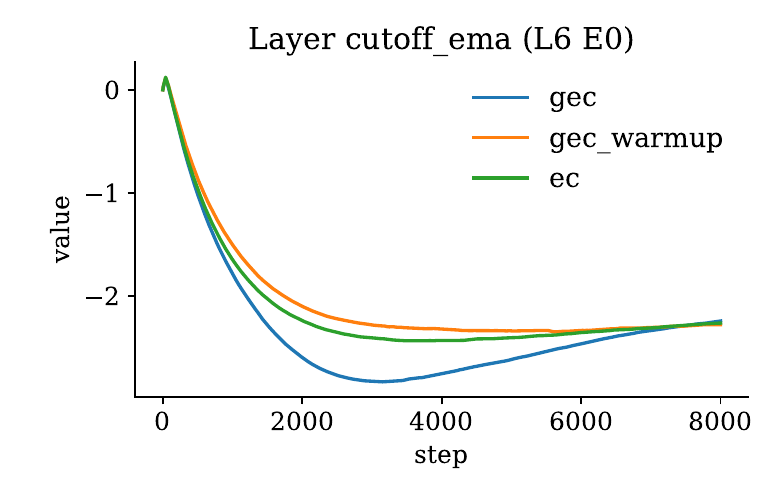} &
        \includegraphics[width=0.32\textwidth]{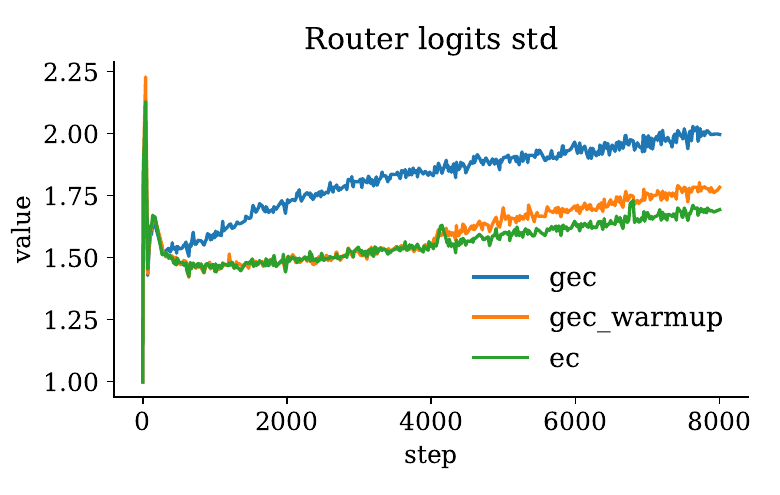} &
        \includegraphics[width=0.32\textwidth]{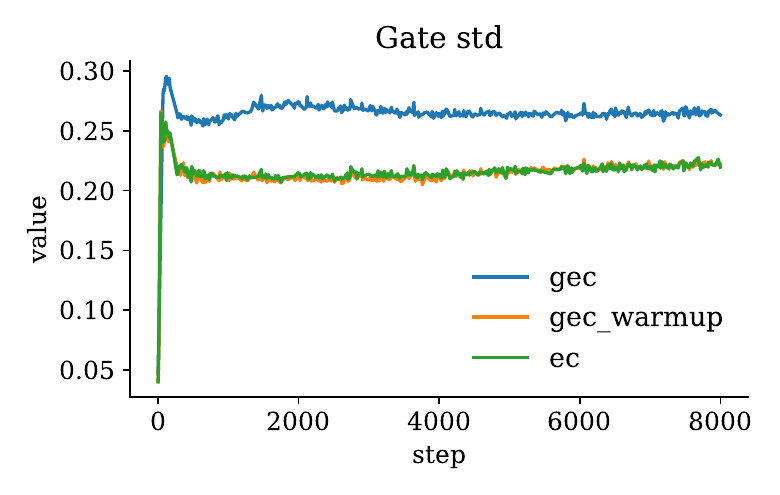} \\
        (d) L6 cutoff-EMA & (e) Router logits std & (f) Gate std \\
    \end{tabular}
    \caption{Effect of TopK warmup on ET training dynamics (first 8k steps). Before 4k steps, ET no warmup exhibits unstable threshold routing: (a) the cutoff-EMA lags behind the actual cutoff, (b) raw expert usage is low, and (c) starvation rate is high as the capacity lower bound is frequently triggered. ET no warmup relies only on the capacity factor during this stage, which is suboptimal because it does not provide a stable threshold estimate or balanced expert learning signal. With warmup, the cutoff-EMA trajectory stabilizes (d), and router outputs show lower variance in both logits (e) and gates (f). Note: \texttt{ec\_shared\_bsz512k} does not log raw usage and underflow metrics, so panels (b) and (c) show only the two ET runs.}
    \label{fig:warmup}
\end{figure*}

\subsection{Comparison to Token Choice}

In our setup, Token Choice with loss-free load balancing shows a less stable routing trajectory than ET and EC, especially in early layers. Figure~\ref{fig:l1_cutoff_ema_deepseek_vs_ec} compares cutoff-EMA (expert 0) at layer 1. ET and EC stabilize quickly, while DeepSeek's loss-free controller drifts upward over training. We treat this as an exploratory observation rather than a central claim, since the behavior may depend on hyperparameters and gating parameterization.

\begin{figure}[h]
    \centering
    \includegraphics[width=0.9\linewidth]{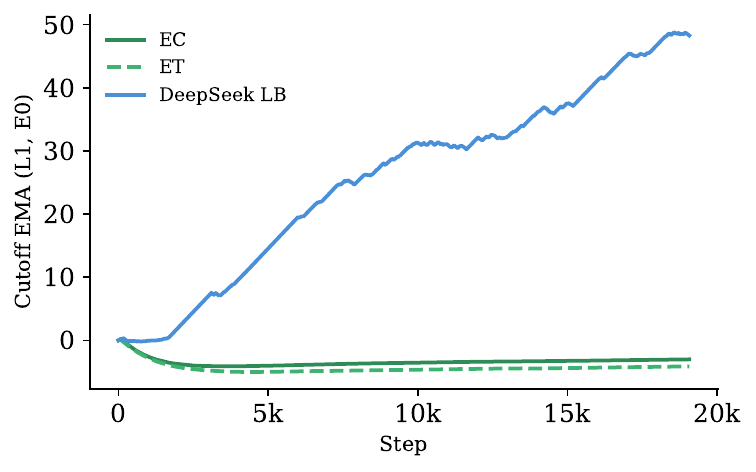}
    \caption{Layer-1 cutoff-EMA (expert 0) under EC/ET vs DeepSeek loss-free load balancing.}
    \label{fig:l1_cutoff_ema_deepseek_vs_ec}
\end{figure}

\subsection{Shared Expert}

We report results of EC and ET no warmup with and without shared expert. For no shared experts, we select 2 experts out of 16 routed experts, roughly matching the number of parameters and compute to the shared variant. In both cases, shared expert improves loss by roughly 0.02. We suspect that while later layers need early layers to empower the router, sometimes early layers have no activated experts, causing ineffective routing. See Table~\ref{tab:shared_expert_ablation} for more details.

\begin{table}[h]
\centering
\caption{Ablation on the shared expert mechanism. In both ET no warmup and EC, shared expert improves loss by roughly 0.02. }
\label{tab:shared_expert_ablation}
\small
\setlength{\tabcolsep}{4pt}
\begin{tabularx}{\columnwidth}{@{}>{\raggedright\arraybackslash}X c r r@{}}
\toprule
\textbf{Method} & \textbf{Shared} & \textbf{CE} & \textbf{CORE} \\
\midrule
EC (bsz 512k) & Yes & 2.843 & 19.94 \\
EC (bsz 512k) & No  & 2.862 & 16.307\\
\midrule
ET no warmup ($\beta{=}0.999$) & Yes & 2.844 & 16.867 \\
ET no warmup ($\beta{=}0.999$) & No  & 2.862 & 18.515 \\
\bottomrule
\end{tabularx}
\end{table}

\subsection{Normalization}

Initially, we assumed that dynamic expert count would bring instability in training because of the scale expansion. However, we found that normalization was ineffective in our setting. No norm outperformed fanout norm by 0.04 in CE loss. We suspect that the norm made experts' contribution unpredictable (see Figure~\ref{fig:norm_comparison}).

\begin{figure}[h]
    \centering
    \includegraphics[width=0.8\linewidth]{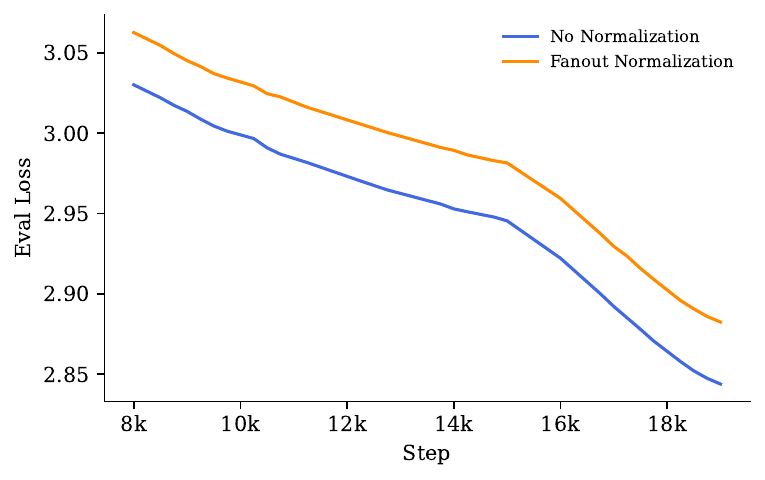}
    \caption{Comparison of evaluation loss with and without normalization. The configuration without normalization (blue) consistently achieves lower loss than the fanout-normalized variant (orange).}
    \label{fig:norm_comparison}
\end{figure} 

\section{Additional Experiment Results}
\label{app:additional_results}

\subsection{Capacity Constraints}
\label{app:capacity}

Because ET's thresholding does not fix the per-batch number of selected tokens for each expert, expert loads can fluctuate around the target, which can risk GPU out-of-memory. Following standard practice~\cite{fedus2022switch}, we enforce capacity constraints during training: each expert processes between $(1-C) \cdot N / E$ and $(1+C) \cdot N / E$ tokens per batch (capacity factor $C = 0.5$), with excess tokens dropped or capacity padded. Since these constraints are absent at inference, frequent triggering would cause train-inference mismatch. Figure~\ref{fig:capacity_constraints} shows that \textbf{capacity constraints are triggered infrequently}: after warmup, both saturation and starvation rates remain low. This confirms that train-inference mismatch from capacity constraints is minimal.

Figure~\ref{fig:capacity_constraints} shows capacity constraint metrics for ET (with warmup) from step 4k onward. After warmup, raw expert usage stabilizes around 6.5\%, and both saturation and starvation rates remain low, confirming that capacity constraints are rarely triggered and train-inference mismatch is minimal.

\begin{figure*}[t]
    \centering
    \includegraphics[width=0.95\textwidth]{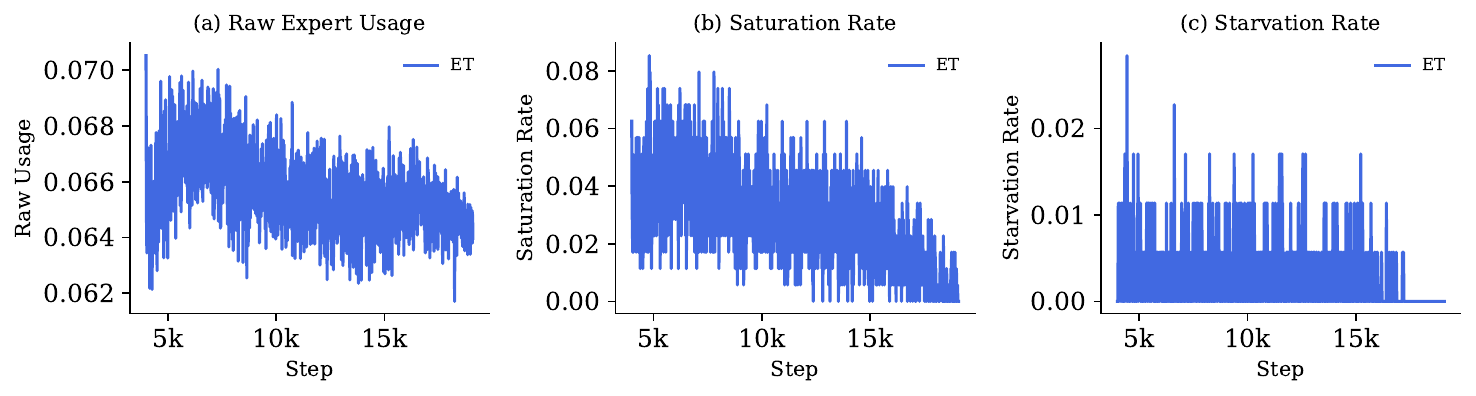}
    \caption{Capacity constraint behavior during ET training (from step 4k onward, after warmup). (a) Raw expert usage before capacity capping. (b) Saturation rate: fraction of selected tokens dropped due to capacity limits. (c) Starvation rate: fraction of unused expert capacity. Both saturation and starvation rates remain low, confirming minimal train-inference mismatch.}
    \label{fig:capacity_constraints}
\end{figure*}

\subsection{Routing Consistency Sweep}
\label{app:routing_consistency}

Section~\ref{sec:routing_consistency} reports the main weighted Jaccard heatmap strip. Here we define the routing-consistency metrics used throughout the comparison and include the companion joint JSD heatmap for the same four runs.

\paragraph{Objects being compared.}
For a given token-layer pair, let $A_t$ and $B_t$ denote the sets of active routed experts under checkpoints $A$ and $B$. The shared expert is excluded, so these sets contain only routed experts. Pooling all active token-layer-expert edges across the full comparison gives
\[
E_A = \{(\ell, t, i) : i \in A_t\}, \qquad
E_B = \{(\ell, t, i) : i \in B_t\}.
\]
The pooled edge sets $E_A$ and $E_B$ are used by the weighted overlap metrics, while the token-level sets $A_t$ and $B_t$ are used by the per-token overlap and divergence metrics below.

\paragraph{Metric definitions.}
Our main metric is weighted Jaccard, defined on the pooled edge sets
\[
\mathrm{weighted\_jaccard}
=
\frac{|E_A \cap E_B|}{|E_A \cup E_B|}.
\]
Its pooled Dice companion is
\[
\mathrm{weighted\_dice}
=
\frac{2|E_A \cap E_B|}{|E_A| + |E_B|}.
\]
We also report token-level overlap averaged uniformly over token-layer pairs
\[
J_t = \frac{|A_t \cap B_t|}{|A_t \cup B_t|}, \qquad
\mathrm{Dice}_t = \frac{2|A_t \cap B_t|}{|A_t| + |B_t|}.
\]
The reported \texttt{jaccard} and \texttt{dice} are the means of $J_t$ and $\mathrm{Dice}_t$ over all token-layer pairs.

For the divergence metrics, we convert each token's binary activation set into a distribution over experts by assigning uniform mass to the active experts
\[
\begin{aligned}
P_t(i) &=
\begin{cases}
1/|A_t|, & i \in A_t \\
0, & \text{otherwise}
\end{cases}
\\
Q_t(i) &=
\begin{cases}
1/|B_t|, & i \in B_t \\
0, & \text{otherwise.}
\end{cases}
\end{aligned}
\]
Using these distributions, we compute
\[
\begin{aligned}
\mathrm{joint\_jsd}(P_t,Q_t)
&=
\frac{1}{2}\mathrm{KL}(P_t \| M_t)
+
\frac{1}{2}\mathrm{KL}(Q_t \| M_t),
\\
M_t &= \frac{1}{2}(P_t + Q_t),
\end{aligned}
\]
and
\[
\mathrm{total\_variation}(P_t,Q_t)
=
\frac{1}{2}\sum_i \lvert P_t(i)-Q_t(i)\rvert.
\]
The reported \texttt{joint\_jsd} and \texttt{total\_variation} are averages over token-layer pairs, and lower values indicate more stable routing.

\paragraph{Empty-routing conventions.}
If both checkpoints activate no routed expert for a token-layer pair, we set token-level Jaccard and Dice to $1$, and joint JSD and total variation to $0$. If only one checkpoint activates any routed expert, we set token-level Jaccard and Dice to $0$, and joint JSD and total variation to $1$. For the pooled metrics, if both pooled edge sets are empty, weighted Jaccard and weighted Dice are both defined as $1$.

\begin{figure*}[t]
    \centering
    \includegraphics[width=\textwidth]{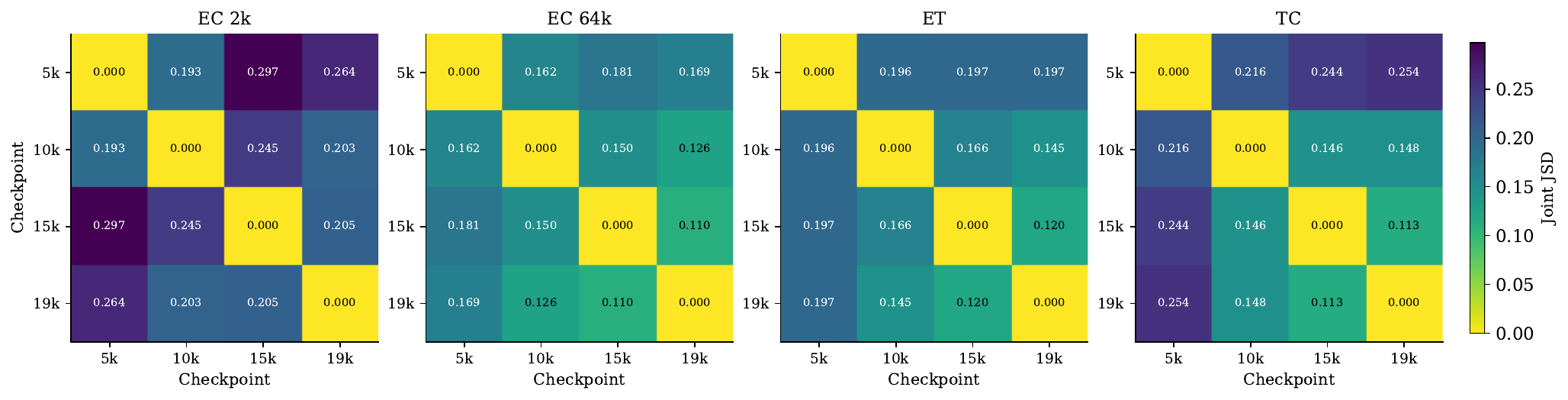}
    \caption{Within-family checkpoint-pair routing consistency on a fixed validation stream, measured by joint JSD. Lower values indicate more stable routing. The same broad story remains. ET separates clearly from EC 2k and stays close to EC 64k.}
    \label{fig:routing_consistency_joint_jsd}
\end{figure*}

\subsection{Activation Dynamics Sweep Across Routing Variants}
\label{app_activation_dynamics_sweep}

Section~\ref{sec:dynamic_computation_allocation} focuses on EC (2k) and ET. This subsection first gives the layerwise continuation for those two main runs using loss binned fanout views, then shows EC 8k in the same overlaid form as the main figure before collecting inverse layerwise diagnostics for the remaining three runs.

\begin{figure*}[t]
    \centering
    \begin{tabular}{@{}cc@{}}
        \includegraphics[width=0.48\textwidth]{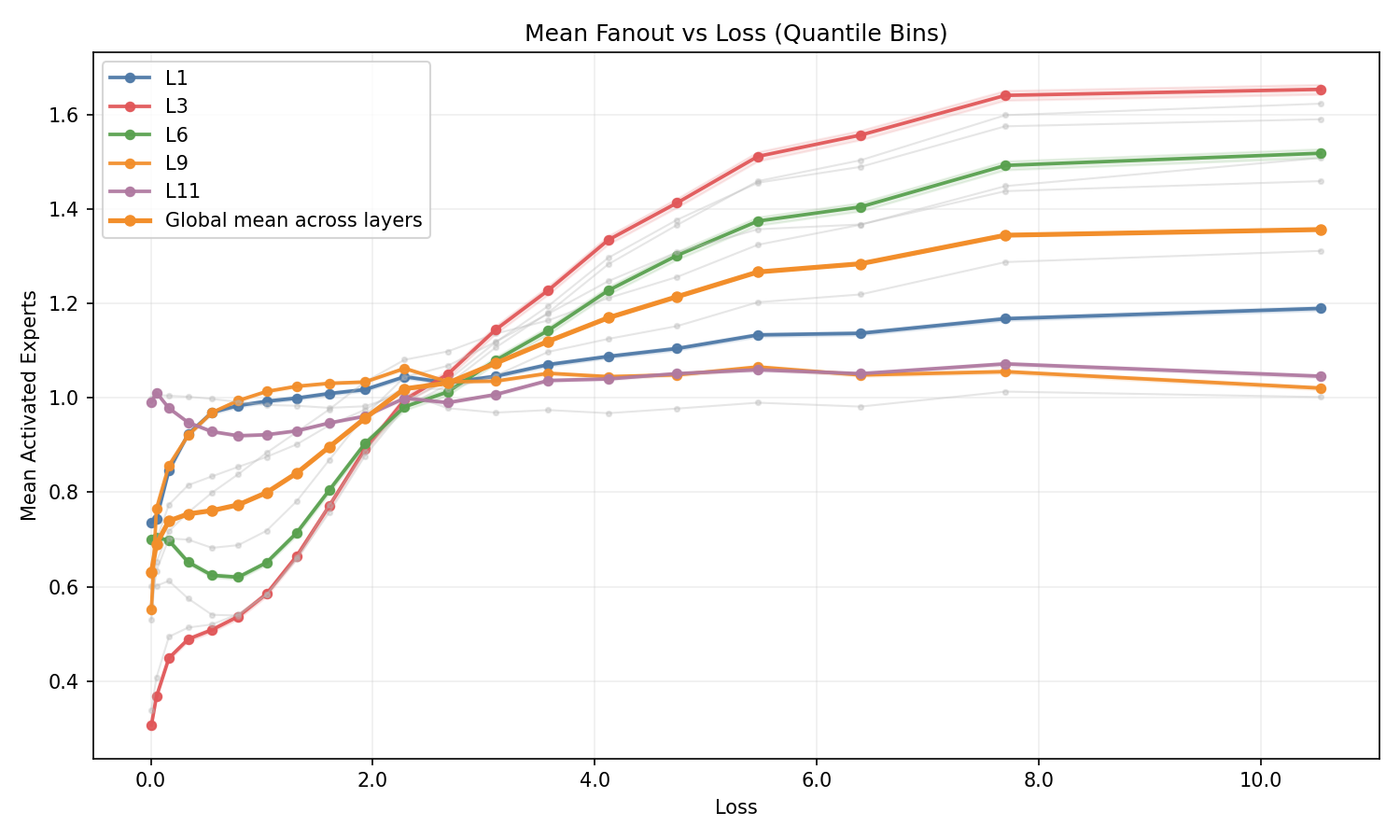} &
        \includegraphics[width=0.48\textwidth]{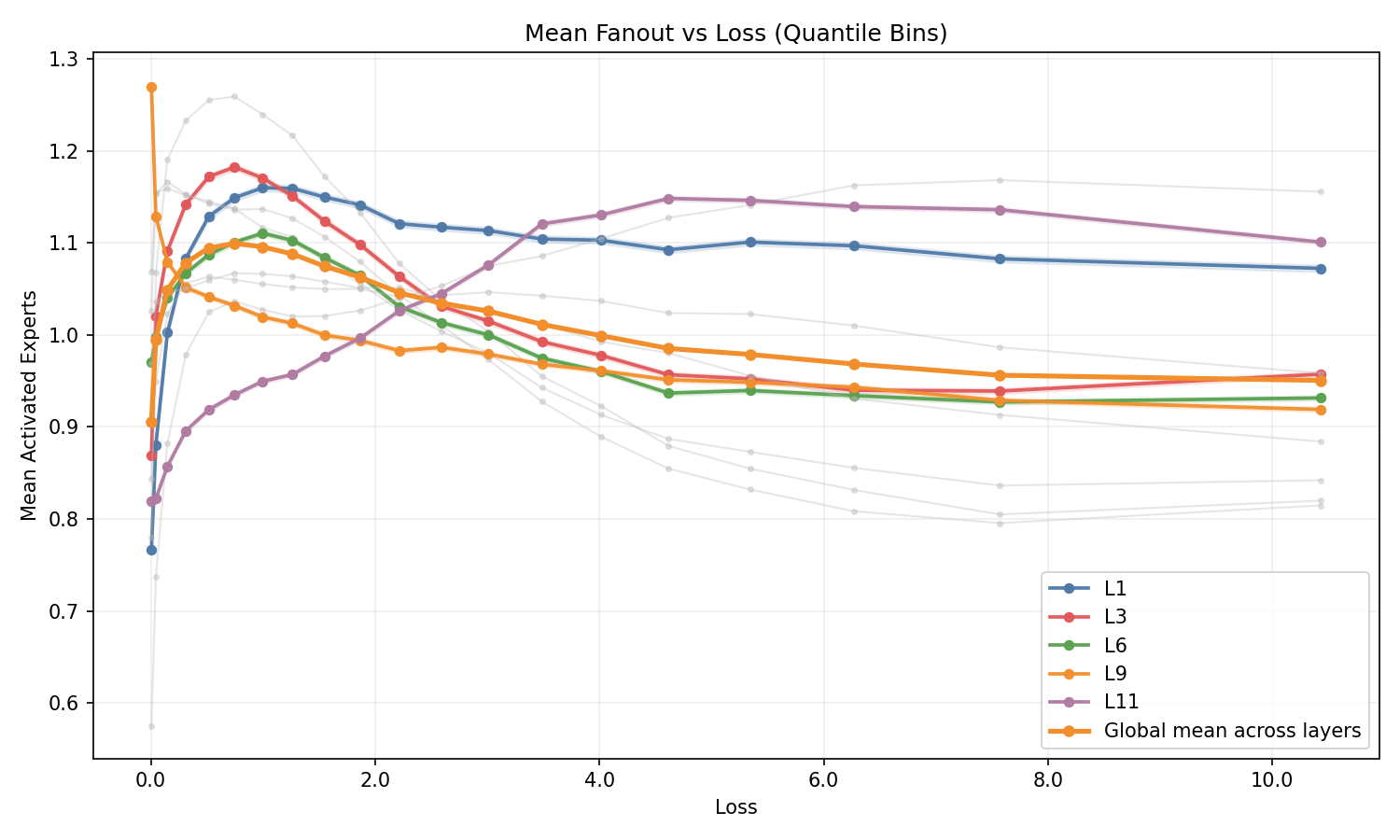} \\
        \scriptsize EC (2k) &
        \scriptsize ET \\
    \end{tabular}
    \caption{Layerwise mean fanout versus loss bin for the two main runs. EC 2k shows a stronger positive dependence in several layers, while ET remains more mixed across depth.}
    \label{fig_activation_layer_main_appendix}
\end{figure*}

\begin{figure*}[t]
    \centering
    \includegraphics[width=0.7\textwidth]{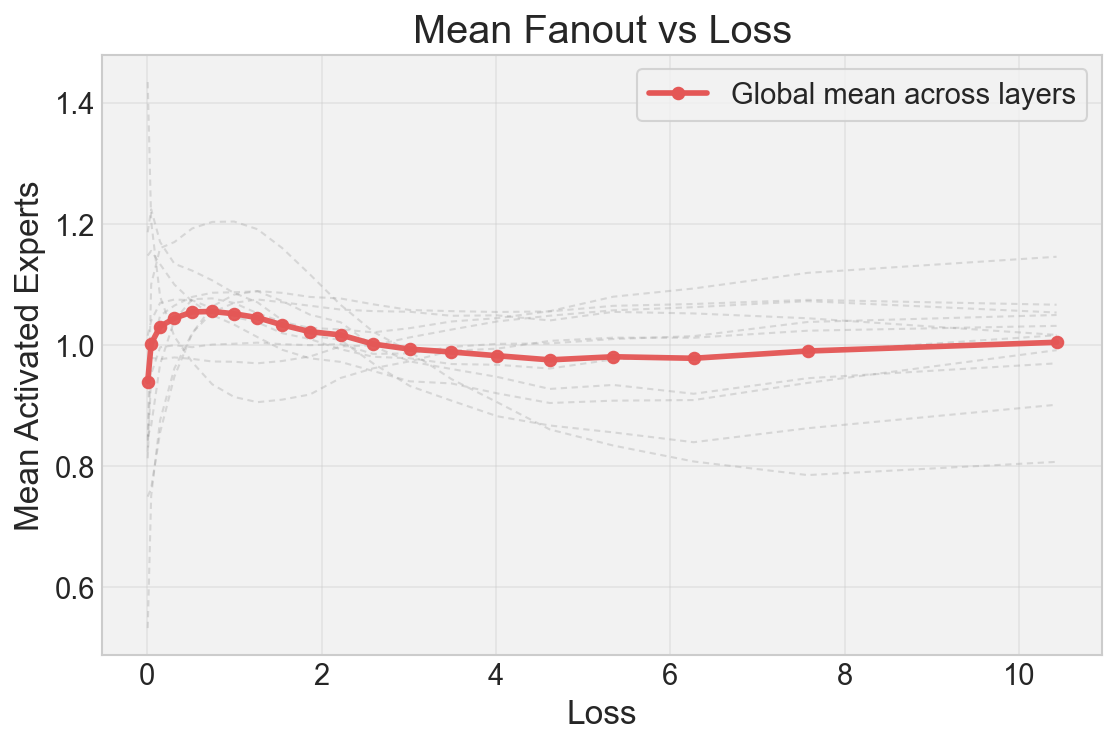}
    \caption{Standalone EC 8k activation dynamics in the same form as the main figure. Faint dashed gray curves show the per-layer means and the solid red curve shows the global mean across layers. Compared with EC 2k, EC 8k is noticeably flatter across the loss range.}
    \label{fig_activation_global_appendix}
\end{figure*}

\begin{figure*}[t]
    \centering
    \begin{tabular}{@{}ccc@{}}
        \includegraphics[width=0.32\textwidth]{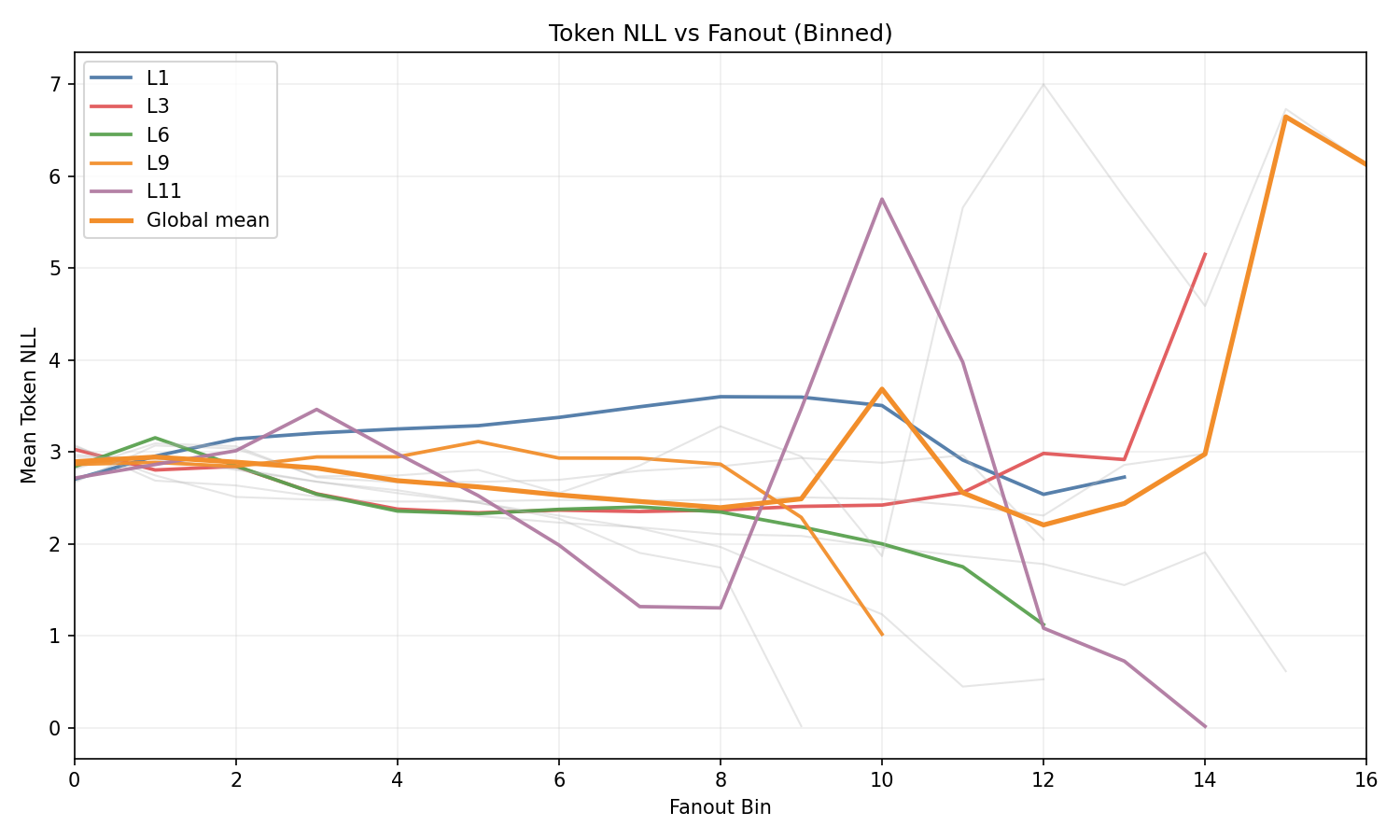} &
        \includegraphics[width=0.32\textwidth]{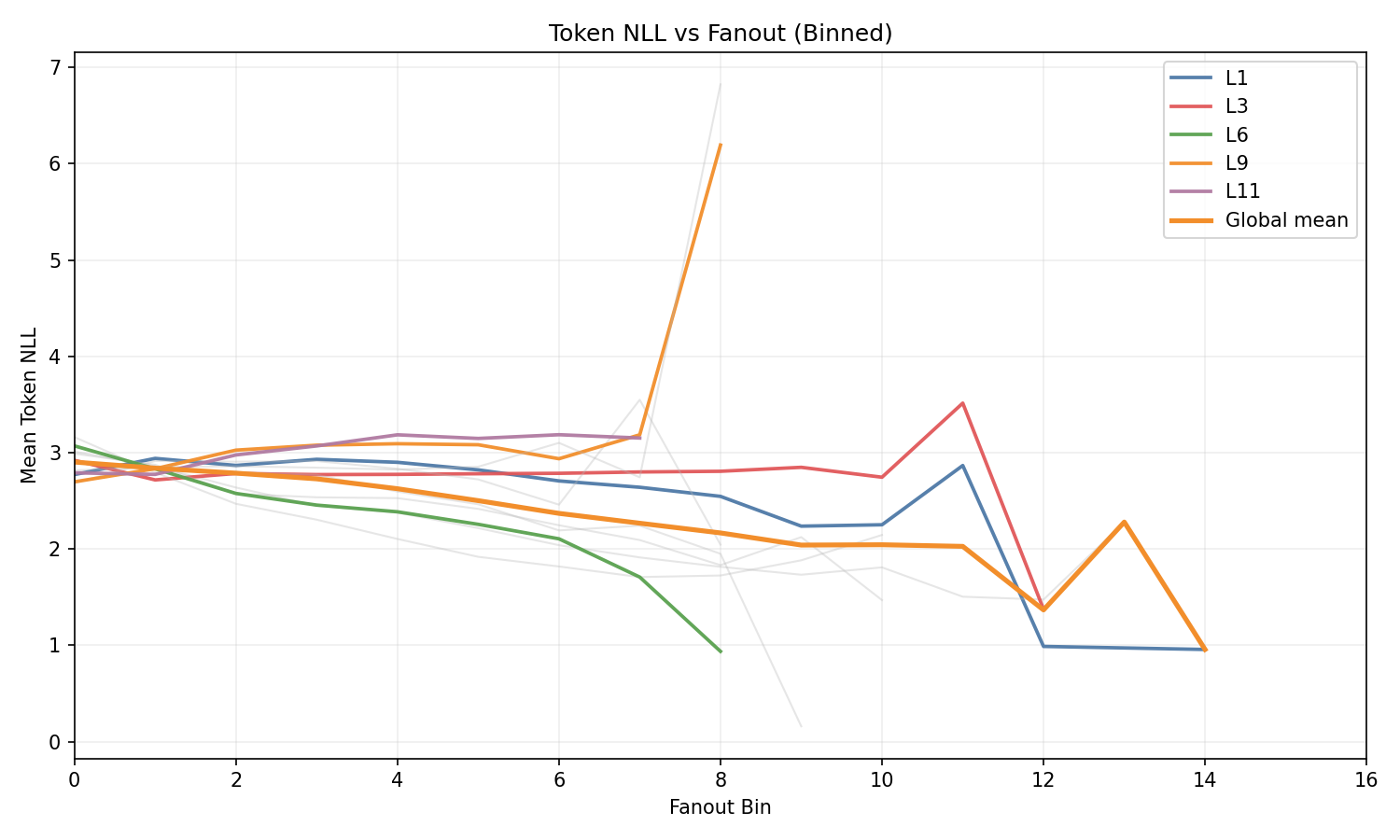} &
        \includegraphics[width=0.32\textwidth]{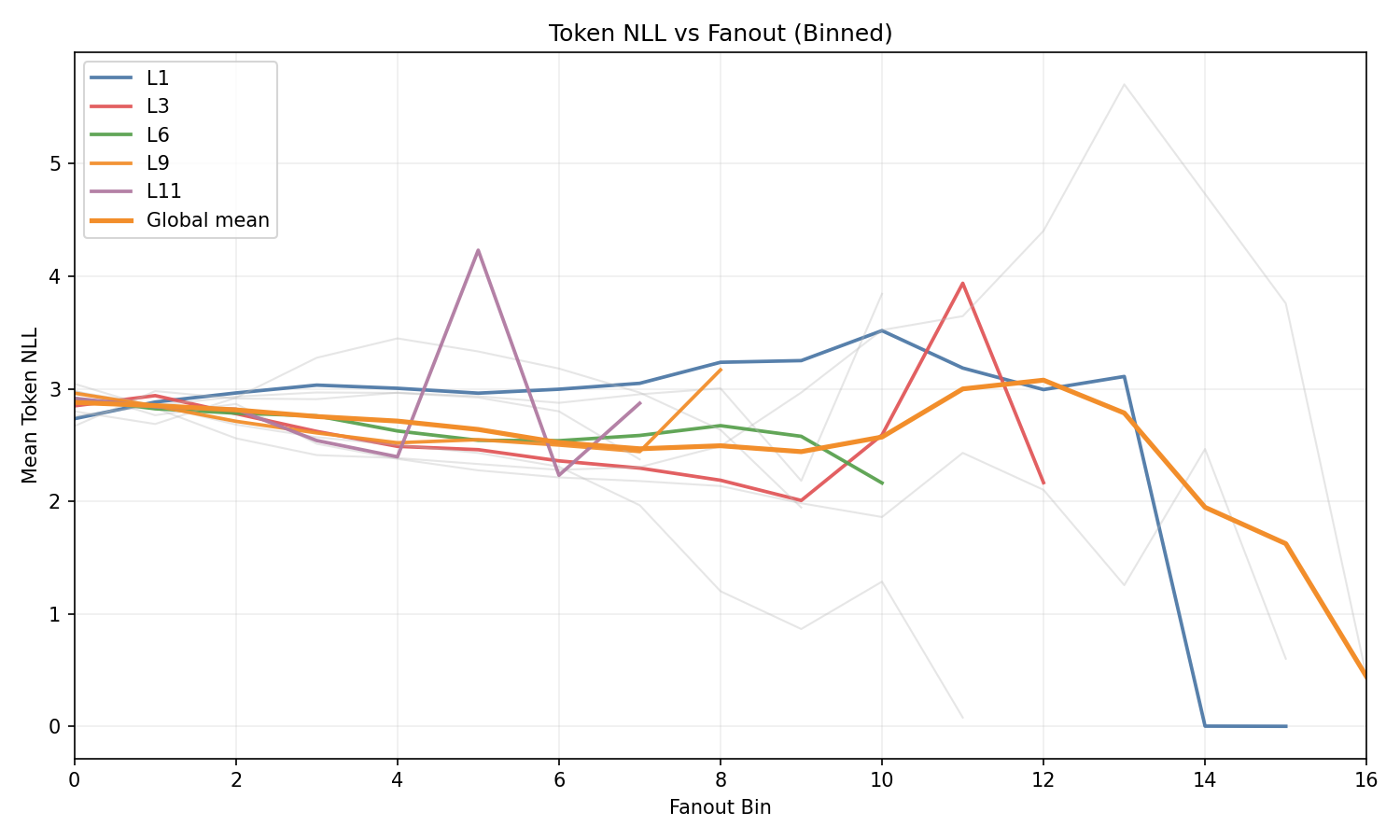} \\
        \scriptsize ET no warmup &
        \scriptsize EC (64k) &
        \scriptsize EC (512k) \\
    \end{tabular}
    \caption{Inverse layerwise activation diagnostics for the remaining three runs. Each panel plots mean loss against fanout by layer, highlighting how the loss--fanout relation remains setup dependent across ET no warmup, EC 64k, and EC 512k.}
    \label{fig_activation_layer_appendix}
\end{figure*}

Across these additional variants, EC 8k is much flatter than EC 2k in the same overlaid inverse view used in the main text. The remaining three inverse layerwise panels show that the loss--fanout relation remains setup dependent, with substantial variation across routing variants and depth.

For completeness, Figure~\ref{fig:routing_logit_hist_appendix} shows representative router logit histograms from the warmup ET run. We include them as a qualitative diagnostic of router behavior.

\begin{figure*}[t]
    \centering
    \begin{tabular}{@{}cc@{}}
        \includegraphics[width=0.48\textwidth]{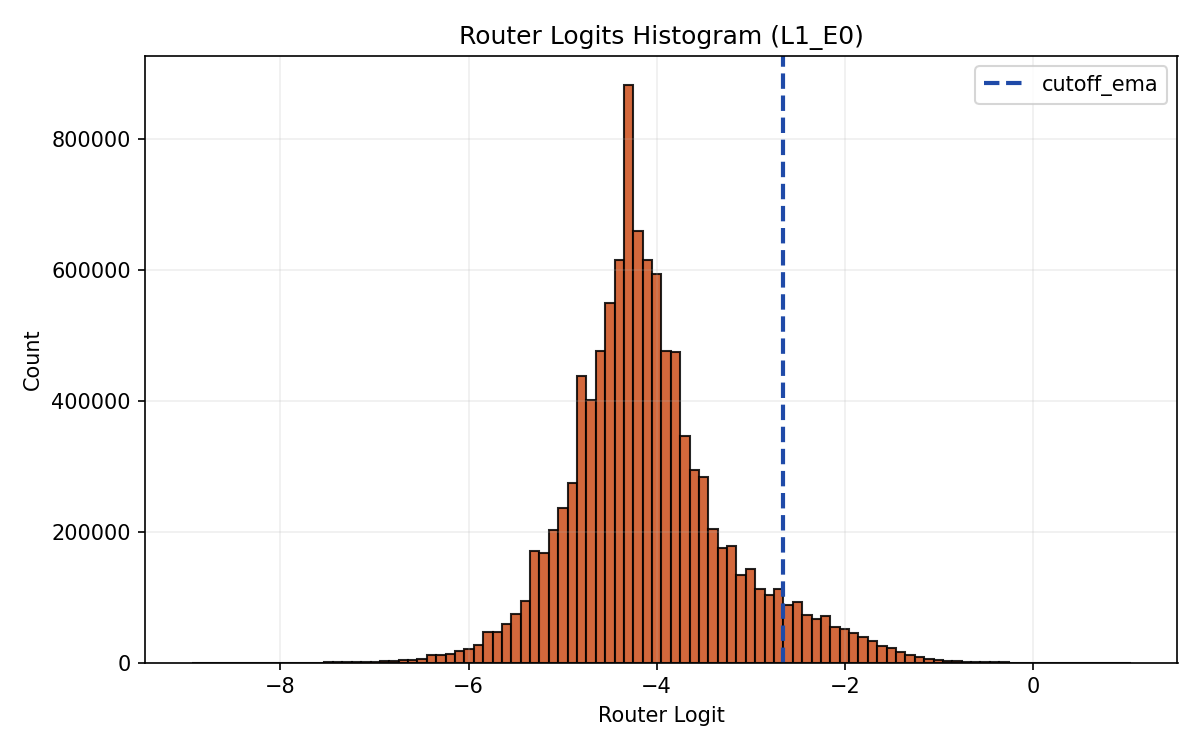} &
        \includegraphics[width=0.48\textwidth]{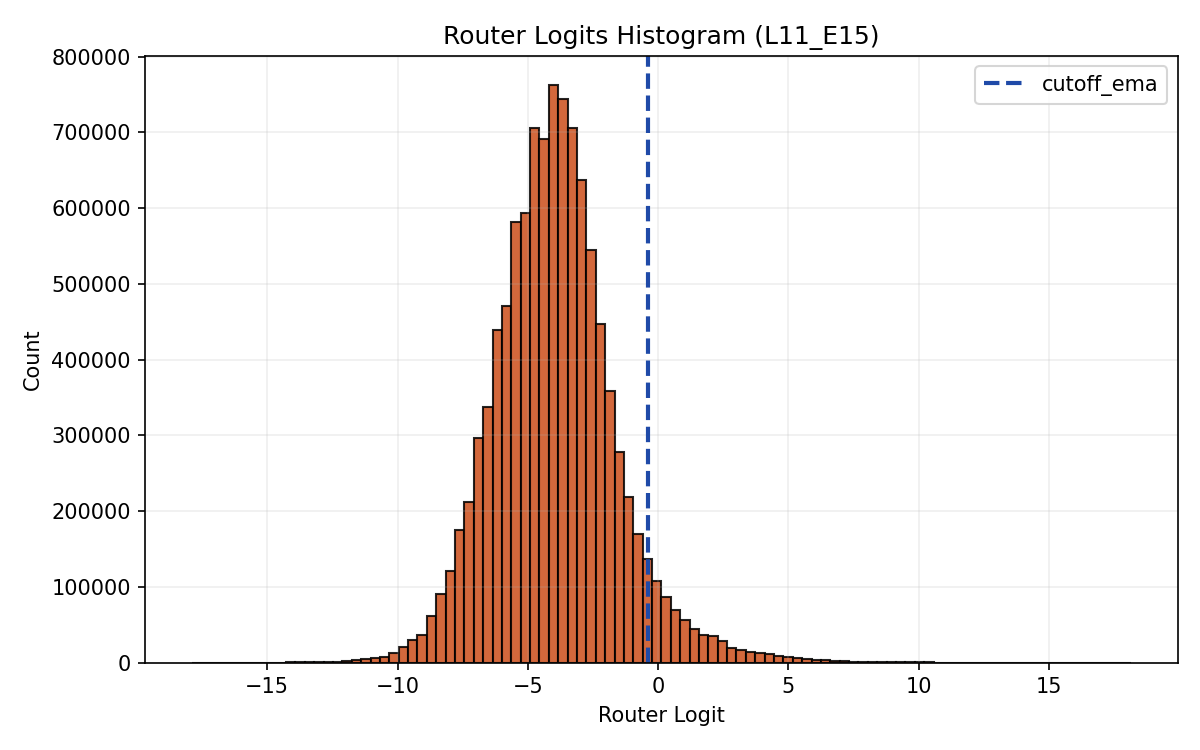} \\
        \scriptsize Layer 1 expert 0 &
        \scriptsize Layer 11 expert 15 \\
    \end{tabular}
    \caption{Router logit histograms from the warmup ET run. The bulk of the distribution is roughly bell shaped, with a heavier right tail. This asymmetric tail is consistent with activated tokens receiving reinforcing gradient signals that can further increase their logits. We view this figure as a qualitative appendix diagnostic rather than a core result.}
    \label{fig:routing_logit_hist_appendix}
\end{figure*}

This appendix also provides extended expert specialization analysis, complementing the summary in Section~\ref{sec:expert_specialization}.

\paragraph{Per-token routing visualizations.}
Figures~\ref{fig:passage_fanout},~\ref{fig:passage_routing}, and~\ref{fig:passage_intensity} show token-level expert routing for additional passages from GSM8K and HumanEval. Across GSM8K passages, content-bearing tokens---particularly numbers (e.g., ``48'', ``72''), mathematical operators (``/'', ``+'', ``=''), and computation markers (``$<<$'')---consistently receive the highest fanout. Function words and punctuation receive minimal activation, indicating that experts preferentially process semantically rich tokens. In HumanEval passages, a similar pattern holds: code-specific tokens (variable names, operators, keywords) receive higher activation than boilerplate text and whitespace.

\begin{figure*}[t]
    \centering
    \begin{minipage}{0.48\textwidth}
        \centering
        \includegraphics[width=\textwidth]{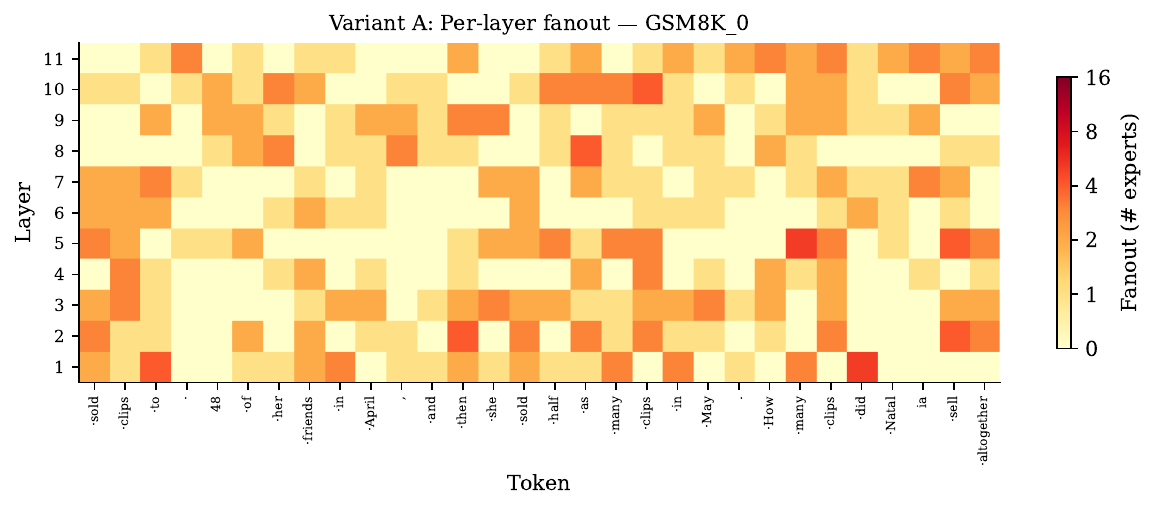}
        \subcaption{GSM8K.}
    \end{minipage}
    \hfill
    \begin{minipage}{0.48\textwidth}
        \centering
        \includegraphics[width=\textwidth]{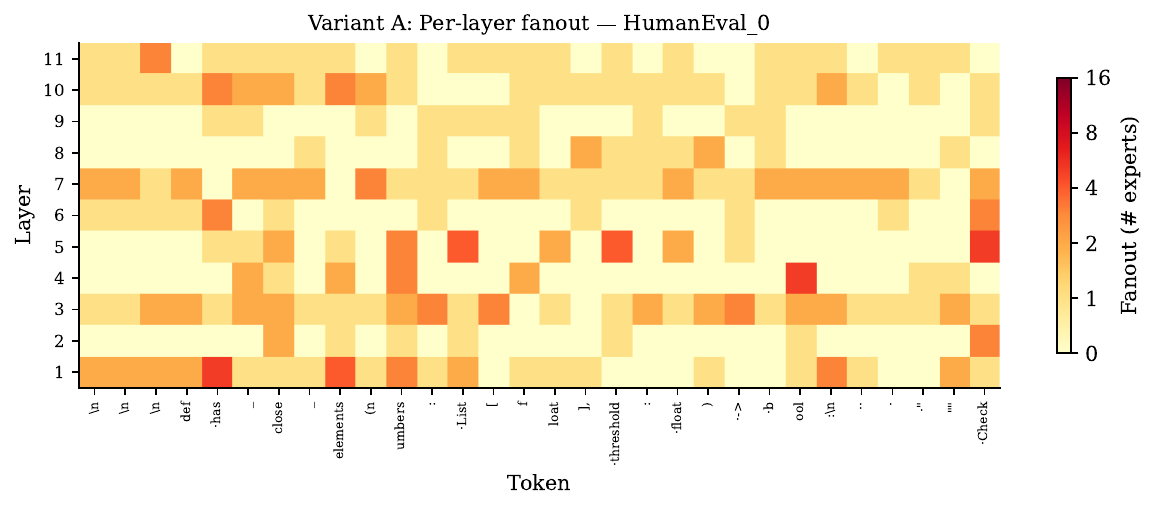}
        \subcaption{HumanEval.}
    \end{minipage}
    \caption{Per-layer expert fanout on GSM8K and HumanEval passages. Each cell shows the number of experts activated for a given token at a given layer. Numerical and code-specific tokens receive substantially higher fanout than function words.}
    \label{fig:passage_fanout}
\end{figure*}

\begin{figure*}[t]
    \centering
    \begin{minipage}{0.48\textwidth}
        \centering
        \includegraphics[width=\textwidth]{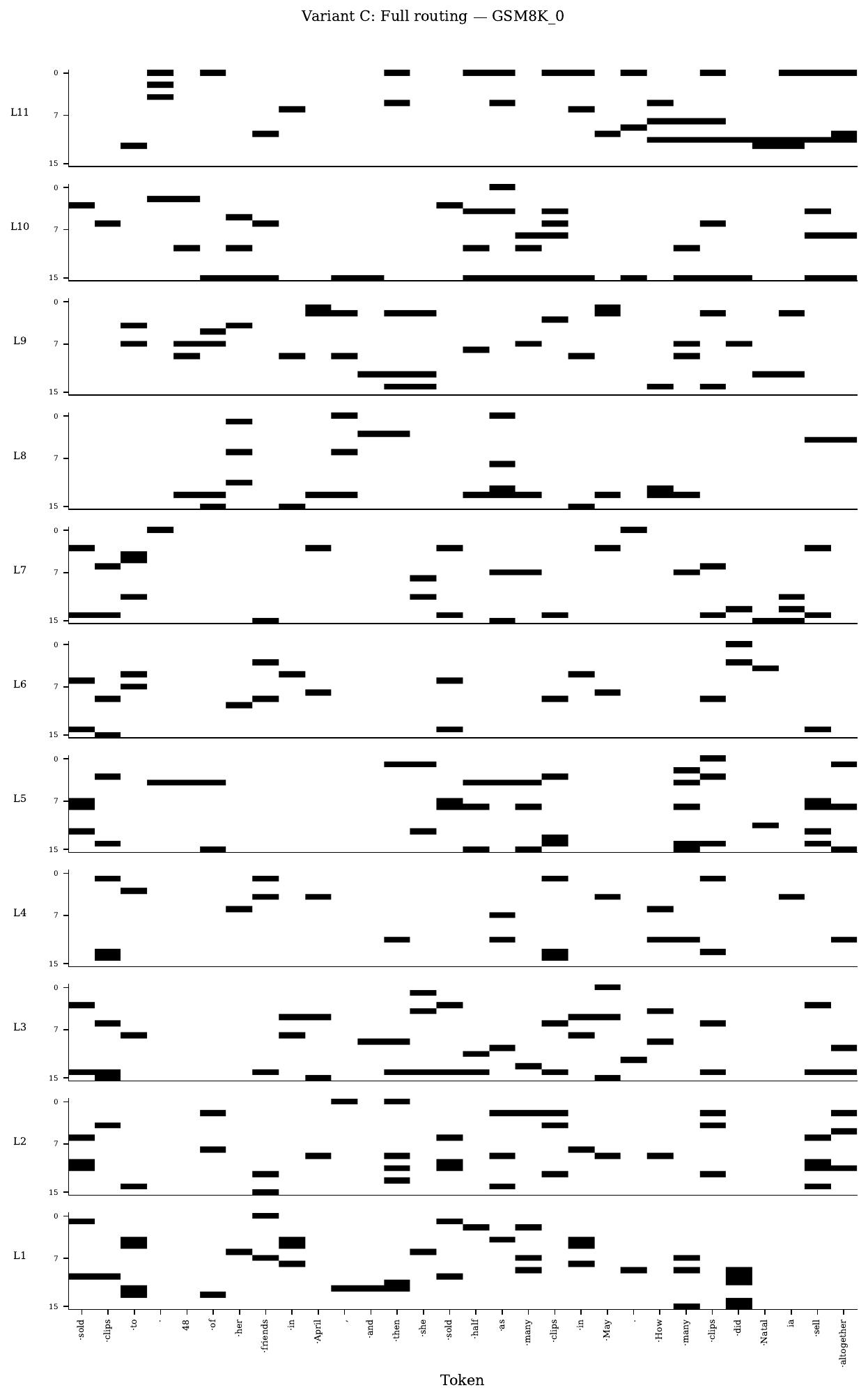}
        \subcaption{GSM8K.}
    \end{minipage}
    \hfill
    \begin{minipage}{0.48\textwidth}
        \centering
        \includegraphics[width=\textwidth]{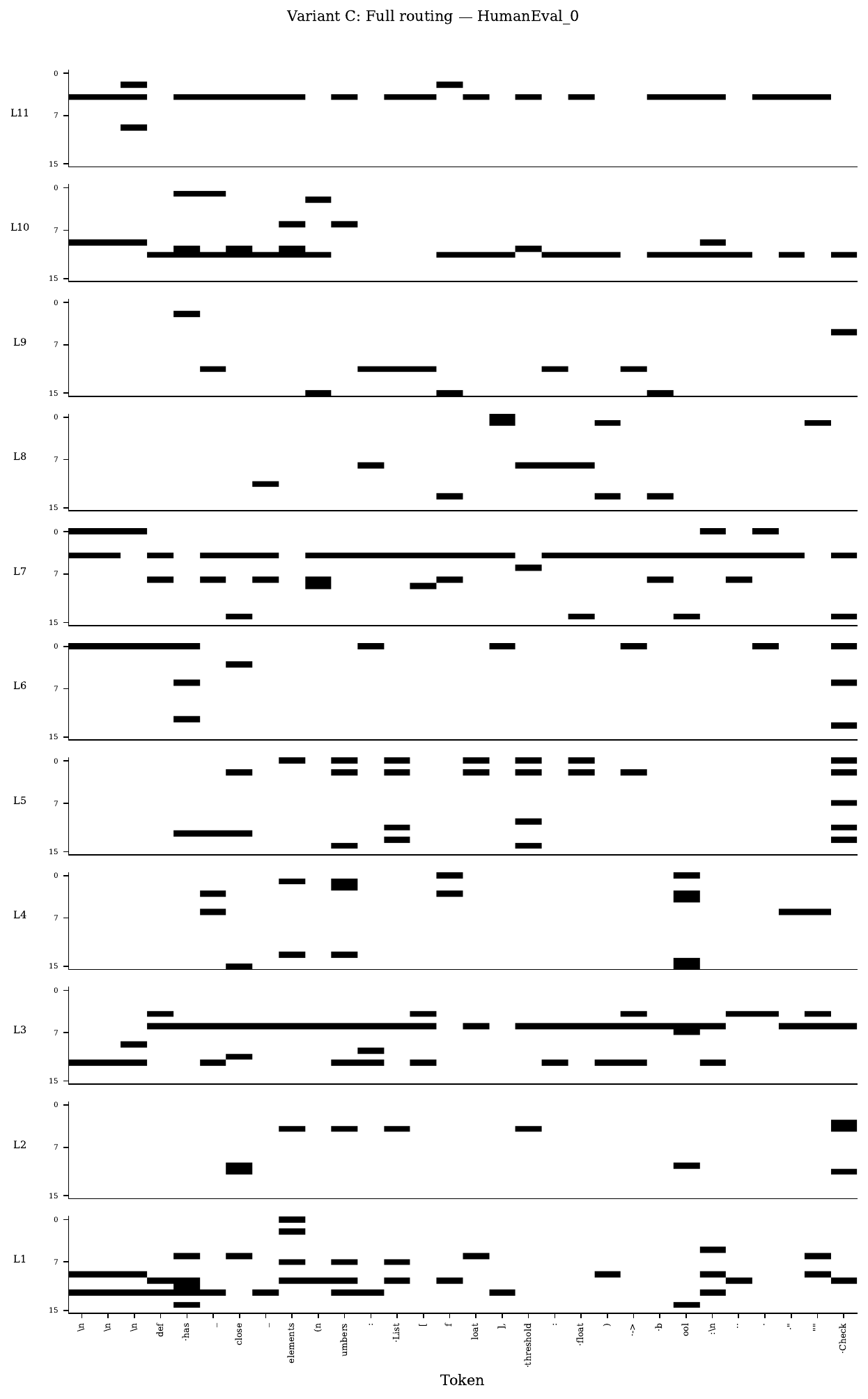}
        \subcaption{HumanEval.}
    \end{minipage}
    \caption{Full expert routing on GSM8K and HumanEval passages. Each panel shows binary expert activation (black = activated) across all layers and experts for every token. Routing patterns reveal domain-specific structure.}
    \label{fig:passage_routing}
\end{figure*}

\begin{figure*}[p]
    \centering
    \includegraphics[width=\textwidth]{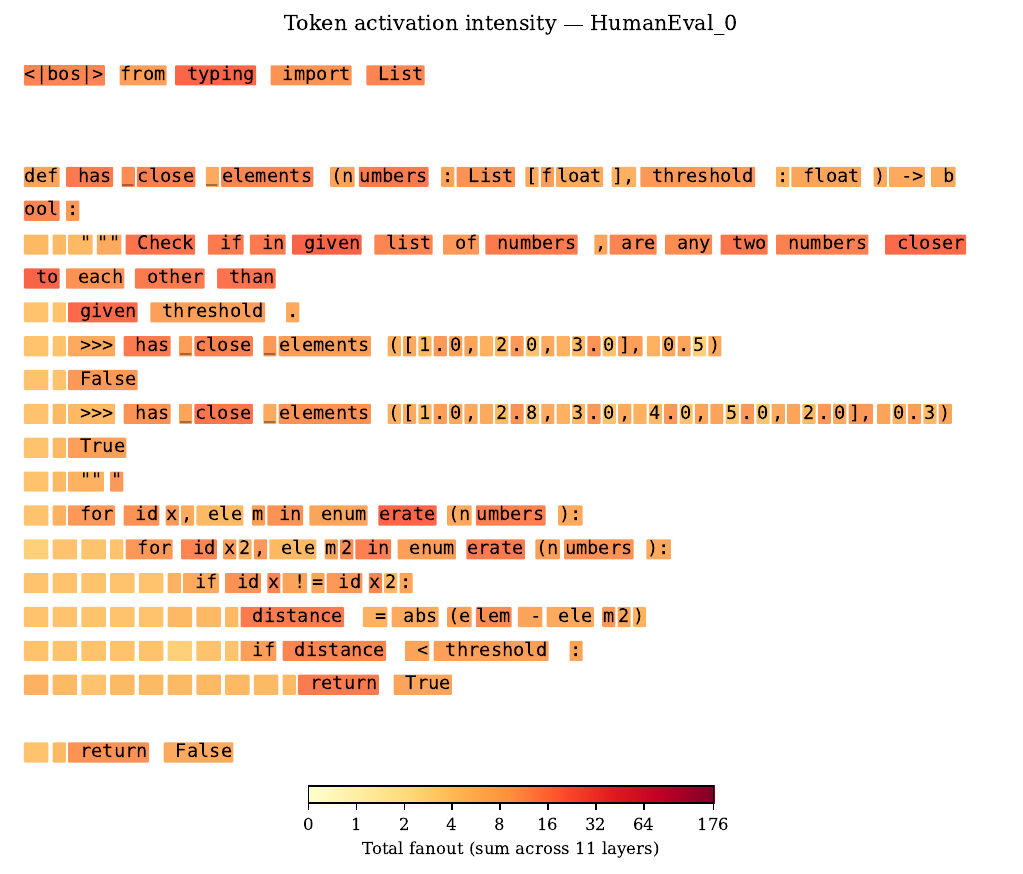}
    \caption{Token activation intensity on a HumanEval passage. Each token is colored by total fanout (sum of experts activated across all layers). Code-specific tokens (variable names, operators, keywords) receive higher activation than boilerplate text.}
    \label{fig:passage_intensity}
\end{figure*}

\paragraph{Expert activation heatmaps.}
Figure~\ref{fig:heatmap_combined} shows expert token ratios across all routing configurations. Each heatmap plots expert ID (columns) versus layer (rows), with color intensity indicating the fraction of domain-specific tokens routed to each expert. The left column shows HumanEval (code) and the right column shows GSM8K (math).

Several patterns emerge across batch sizes. EC with batch size 2k (top row) shows diffuse activation: while some experts exhibit domain preferences (e.g., concentrated dark cells at specific layer-expert pairs), the overall pattern is noisy with activation spread across many experts. As batch size increases to 8k and 64k, specialization sharpens---dark cells become more concentrated and background activation fades, indicating that experts more consistently capture domain-specific tokens when routing decisions are made over larger token pools. EC at 512k shows the most pronounced specialization, with a small number of experts per layer handling the majority of domain tokens.

ET (bottom row) achieves specialization comparable to large-batch EC. The activation patterns closely resemble EC at 512k, with concentrated expert-domain associations across layers. This confirms that ET's population-level threshold mechanism captures the same routing structure as large-batch top-$k$ selection, without requiring batch size coordination at inference.

Comparing across domains, the HumanEval and GSM8K columns reveal that experts develop \emph{different} specialization patterns for code versus math. Certain experts that are heavily activated for code tokens (e.g., dark cells in the HumanEval column) show low activation for math, and vice versa. This cross-domain differentiation is consistent across all routing configurations, suggesting that expert specialization reflects genuine domain-level structure rather than artifacts of a particular routing strategy.

\begin{figure*}[t]
    \centering
    \includegraphics[width=0.7\textwidth]{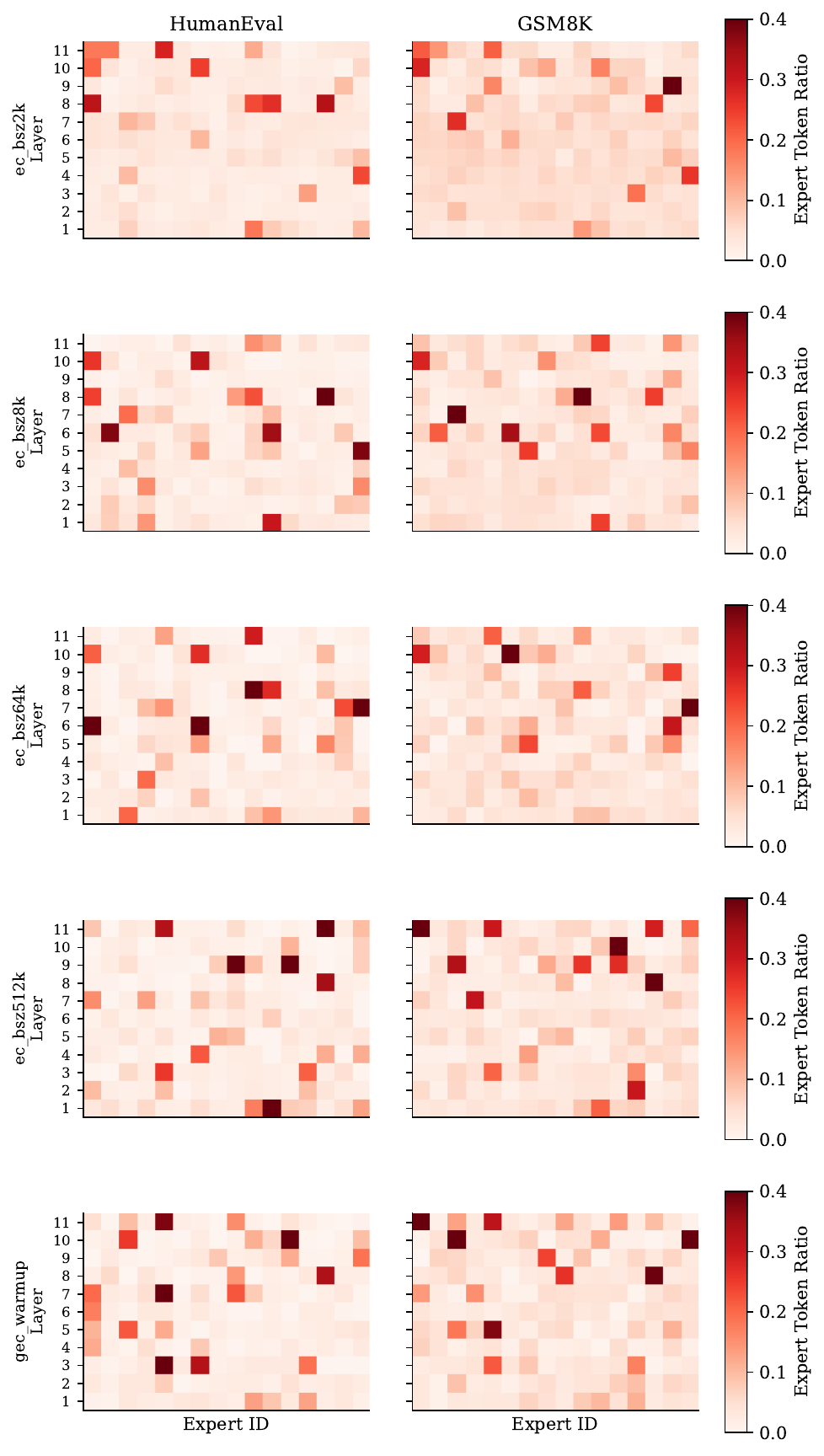}
    \caption{Expert activation heatmaps across routing configurations. Each row corresponds to a routing variant (EC with batch sizes 2k, 8k, 64k, 512k, and ET). Columns show HumanEval (code) and GSM8K (math) domains. Color intensity indicates expert token ratio. Specialization sharpens with larger EC batch sizes, and ET achieves comparable patterns without batch size dependence.}
    \label{fig:heatmap_combined}
\end{figure*}

\end{document}